\let\csname equation*\endcsname\relax
\let\csname endequation*\endcsname\relax
\definecolor{darkred}{rgb}{.7,0,0}
\definecolor{darkgreen}{rgb}{0,0.7,0}
\definecolor{darkblue}{rgb}{0,0,0.7}
\newcommand{\argmin}{\mathop{\mathrm{arg\,min}}}
\theoremstyle{remark}
\newtheorem{example}{Example}[section]
\newtheorem{theorem}{Proposition}
\begin{document}

\title[EKI: A Derivative-Free Technique For Machine Learning Tasks]{Ensemble Kalman Inversion: A Derivative-Free Technique For Machine Learning Tasks}

\author{Nikola B. Kovachki}
\address{Computing and Mathematical Sciences, California Institute of Technology, Pasadena, California 91125, USA}
\ead{nkovachki@caltech.edu}

\author{Andrew M. Stuart}
\address{Computing and Mathematical Sciences, California Institute of Technology, Pasadena, California 91125, USA}
\ead{astuart@caltech.edu}

\begin{abstract}
The standard probabilistic perspective on machine learning gives rise
to empirical risk-minimization tasks 
that are frequently solved by stochastic gradient descent (SGD) and variants
thereof. We present a formulation of these tasks as 
classical inverse or filtering problems and, furthermore, we propose an efficient,
gradient-free algorithm for finding a solution to these problems using
ensemble Kalman inversion (EKI). 
Applications of our approach include
offline and online supervised learning with deep neural networks, 
as well as graph-based semi-supervised learning. 
The essence of the EKI procedure is an ensemble based approximate gradient
descent in which derivatives are replaced by differences from within the
ensemble.  We suggest several modifications to the basic method, derived from
empirically successful heuristics developed in the context of SGD.  
Numerical results demonstrate  wide applicability and robustness 
of the proposed algorithm.

\end{abstract}

\noindent{\it Keywords}: Machine learning, Deep learning, Derivative-free optimization, Ensemble Kalman inversion, Ensemble Kalman filtering.
\submitto{\IP}

\section{Introduction}

\subsection{The Setting}
The field of machine learning has seen enormous advances over the last 
decade.  These advances have been driven by two key elements: (i) the 
introduction of flexible architectures which have the expressive power needed to 
efficiently represent the input-output maps encountered in practice;
(ii) the development of smart optimization tools which train the
free parameters in these input-output maps to match data. 
The text \cite{deeplearningbook} overviews the start-of-the-art.

While there is little work in the field of derivative-free, 
paralellizable methods for machine learning tasks, such advancements are 
greatly needed. Variants on the Robbins-Monro algorithm 
\cite{robbinsmonroe}, such as  stochastic gradient descent (SGD), have 
become state-of-the-art for practitioners in machine learning \cite{deeplearningbook} and an attendant theory \cite{dieuleveut2016nonparametric,bach2013non,schmidt2017minimizing,lee2016gradient,jordan2017gradient} is emerging. 
However the approach faces many challenges and limitations \cite{xavier,hardrnn1,admm}.
New directions are needed to overcome them, especially for parallelization, as attempts 
to parallelize SGD have seen limited success \cite{elasticaveraging}.

A step in the direction of a derivative-free, parallelizable algorithm for the training of 
neural networks was attempted in \cite{mac} by use of the the method of auxiliary coordinates (MAC). 
Another approach using the alternating direction method of multipliers (ADMM) and a Bregman iteration
is attempted in \cite{admm}. Both methods seem successful but are only demonstrated on supervised learning tasks with
shallow, dense neural networks that have relatively few parameters. In reinforcement learning,
genetic algorithm have seen some success (see \cite{genetic} and references therein), but it is not clear how to deploy them outside of that domain. 

To simultaneously address the issues of parallelizable and derivative-free
optimization, we demonstrate in this paper the potential for using
ensemble Kalman methods to undertake machine learning tasks.
Optimizing neural networks via Kalman filtering has been attempted before (see \cite{kalmannnbook} and references therein),
but most have been through the use of Extended or Unscented Kalman Filters. 
Such methods are plagued by 
inescapable computational and memory constraints and
hence their application has been restricted to 
small parameter models. 
A contemporaneous paper by Haber et al \cite{haber} has introduced a variant on
the ensemble Kalman filter, and applied it to the training of neural networks;
our paper works with a more standard implementations of ensemble Kalman
methods for filtering and inversion \cite{dataassim,enkfinverse} and 
demonstrates potential for these methods within a wide range of 
machine learning tasks.

\subsection{Our Contribution}

The goal of this work is two-fold:

\begin{itemize}
\item First we show that many of the common tasks considered 
in machine learning can be formulated in the unified framework of Bayesian inverse problems. 
The advantage of this point of view is that it allows for the transfer
of theory and algorithms developed for inverse problems to the field of
machine learning, in a manner accessible to the inverse problems community.
To this end we give a precise, mathematical
description of the most common approximation architecture in machine learning, 
the neural network (and its variants); we use the language of dynamical systems,
and avoid references to the neurobiological language and notation more 
common-place in the applied machine learning literature. 

\item Secondly, adopting the inverse problem point of view, we show that 
variants of ensemble Kalman methods (EKI, EnKF) can be just as effective at 
solving most machine learning tasks as the plethora of gradient-based methods 
that are widespread in the field. We borrow some ideas from 
machine learning and optimization to modify these ensemble  methods, to 
enhance their performance.
\end{itemize}

Our belief is that by formulating machine learning tasks as inverse problems,
and by demonstrating the potential for methodologies to be transferred from
the field of inverse problems to machine learning, we will open up new ways
of thinking about machine learning which may ultimately lead to 
deeper understanding of the optimization tasks at the heart of the field,
and to improved methodology for addressing those tasks. 
To substantiate the second
assertion we give examples of the competitive application of ensemble
methods to supervised, semi-supervised, and online learning problems with 
deep dense, convolutional, and recurrent neural networks. To the best of 
our knowledge, this is the first paper to successfully apply ensemble Kalman
methods to such a range of relatively large scale machine learning tasks.
Whilst we do not attempt parallelization, ensemble methods are 
easily parallelizable and we give references to relevant literature. 
Our work leaves many open questions and future research directions for
the inverse problems community.

\subsection{Notation and Overview}

We adopt the notation \(\mathbb{R}\) for the real axis, \(\mathbb{R}_+\) 
the subset of non-negative reals, and \(\mathbb{N} = \{0,1,2,\dots\}\) for the set of natural numbers. 
For any set \(A\), we use \(A^n\) to denote its \(n\)-fold Cartesian product for any \(n \in \mathbb{N} \setminus \{0\}\). 
For any function \(f : A \rightarrow B\), we use \(\text{Im}(f) = \{ y \in B : 
y=f(x),{\rm for}\,{\rm some}\, x \in A\}\) to denote its image. For any subset \(V \subseteq \mathcal{X}\) of a linear space \(\mathcal{X}\), we let
\(\dim V\) denote the dimension of the smallest subspace containing \(V\). For any Hilbert space \(\mathcal{H}\), we adopt the notation \(\|\cdot\|_\mathcal{H}\) and \(\langle \cdot, \cdot \rangle_\mathcal{H}\) to be its associated norm and inner-product respectively. Furthermore for any symmetric, positive-definite operator \(C: \mathcal{D}(C) \subset \mathcal{H} \rightarrow \mathcal{H}\), we use the notation \(\|\cdot\|_C = \|C^{-\frac{1}{2}} \cdot\|_\mathcal{H}\) and \(\langle \cdot,  \cdot \rangle_C = \langle C^{-\frac{1}{2}} \cdot, C^{-\frac{1}{2}} \cdot, \rangle_\mathcal{H}\). For any two topological spaces \(\mathcal{X}, \mathcal{Y}\), 
we let \(C(\mathcal{X},\mathcal{Y})\) denote the set of continuous functions from \(\mathcal{X}\) to \(\mathcal{Y}\).
We define
\[\mathbb{P}^m =  \{y \in \mathbb{R}^m \; | \; \|y\|_1 = 1, y_1,\dots,y_m \geq 0\}\]
the set of \(m\)-dimensional probability vectors, and
the subset
\[\mathbb{P}^m_0 = \{y \in \mathbb{R}^m \; | \; \|y\|_1 = 1, y_1,\dots,y_m > 0\}.\]

Section \ref{learningproblem} delineates the learning problem, starting from the 
classical, optimization-based framework, and shows how it can be formulated as a 
Bayesian inverse problem. Section \ref{approxarchitectures} gives a brief overview of
modern neural network architectures as dynamical systems. Section \ref{algorithms} outlines 
the state-of-the-art algorithms for fitting neural network models, as well as the EKI method and 
our proposed modifications of it. 
Section \ref{numericalexperiments} presents our numerical experiments,
comparing and contrasting EKI methods with the state-of-the-art. Section \ref{conclusion} gives 
some concluding remarks and possible future directions for this line of work.

\section{Problem Formulation}
\label{learningproblem}

Subsection \ref{classicalframework} overviews the standard formulation of machine learning 
problems with subsections \ref{supervisedlearning}, \ref{semisupervisedlearning}, and
\ref{onlinelearning}  presenting supervised, semi-supervised, and online learning respectively.
Subsection \ref{inverseproblems} sets forth the Bayesian inverse problem interpretation of 
these tasks and gives examples for each of the previously presented problems.

\subsection{Classical Framework}
\label{classicalframework}

The problem of learning is usually formulated as minimizing an expected cost over some space of mappings relating the data \cite{deeplearningbook, statisticallearning,murphyml}.
More precisely, let \(\mathcal{X}\), \(\mathcal{Y}\) be separable Hilbert spaces and let \(\mathbb{P} (x,y)\) be a probability measure on the product space \(\mathcal{X} \times \mathcal{Y}\). Let \(\mathcal{L}: \mathcal{Y} \times \mathcal{Y} \rightarrow \mathbb{R}_+\) be a positive-definite function and define 
\(\mathcal{F}\) to be the set of mappings \(\{\mathcal{G}: \mathcal{X} \rightarrow \mathcal{Y}\}\) on which the composition \(\mathcal{L}(\mathcal{G}(\cdot),\cdot)\) is \(\mathbb{P}\)-measurable for all \(\mathcal{G}\) in \(\mathcal{F}\). 
Then we seek to minimize the functional
\begin{equation}
\label{eq:min}
Q(\mathcal{G}) = \int_{\mathcal{X} \times \mathcal{Y}} \mathcal{L}(\mathcal{G}(x),y) \; d \mathbb{P} (x,y).
\end{equation}
across all mappings in \(\mathcal{F}\). This minimization may not be well defined as there could be 
infimizing sequences not converging in \(\mathcal{F}\). Thus further constraints (regularization) are needed to obtain an 
unambiguous optimization problem. These are generally introduced by working 
with parametric forms of \(\mathcal{G}\). Additional, explicit regularization
is also often added to parameterized versions of \eqref{eq:min}.

Usually \(\mathcal{L}\) is called the loss or cost function and acts as a metric-like function on \(\mathcal{Y}\); however it is useful in applications to
relax the strict properties of a metric, and we, in particular, do not
require \(\mathcal{L}\) to be symmetric or subadditive. With this interpretation of
\(\mathcal{L}\) as a cost, we are seeking a
mapping \(\mathcal{G}\) with lowest cost, on average with respect to
\(\mathbb{P}\). There are numerous choices for \(\mathcal{L}\) used in applications \cite{deeplearningbook}; some of the most common include the squared-error loss \(\mathcal{L}(y',y) = \|y - y'\|^2_\mathcal{Y}\) used for regression tasks, and the cross-entropy loss \(\mathcal{L}(y',y) = - \langle y, \log y' \rangle_\mathcal{Y}\) used for classification tasks. In both these cases we often have \(\mathcal{Y}=\mathbb{R}^K\), and, for
classification, we may restrict the class of mappings to those taking values
in \(\mathbb{P}^K\).

Most of our focus will be on \textit{parametric} learning where we approximate \(\mathcal{F}\) by a parametric family of models \(\{\mathcal{G}(u|\cdot): \mathcal{X} \rightarrow \mathcal{Y}\}\) where \(u \in \mathcal{U}\) is the parameter and \(\mathcal{U}\) is a separable Hilbert space. This allows us to work with a computable class of functions and perform the minimization directly over \(\mathcal{U}\). Much of the early work in machine learning focuses on model classes which make the associated minimization problem convex \cite{firstsvm, ridgeregression, murphyml}, but the recent empirical success of neural networks has driven research away from this direction \cite{deeplearningnature,deeplearningbook}. In Section \ref{approxarchitectures}, we give a brief overview of the model space of 
neural networks.

While the formulation presented in \eqref{eq:min}
is very general, it is not directly transferable
to practical applications as, typically, we have no direct access to 
\(\mathbb{P}(x,y)\). How we choose to address this issue depends on the information known to us, usually in the form of a data set, 
and defines the type of learning. Typically information about \(\mathbb{P}\) is accessible only through our
sample data. The next three subsections describe particular structures
of such sample data sets which arise in applications, and the minimization
tasks which are constructed from them to determine the parameter \(u\).

\subsubsection{Supervised Learning}
\label{supervisedlearning}

Suppose that we have a dataset \(\{(x_j,y_j)\}_{j=1}^N\) assumed to be i.i.d. samples from \(\mathbb{P}(x,y)\). We can thus replace the integral
\eqref{eq:min}  with its Monte Carlo approximation, and add a regularization
term, to obtain the following  minimization problem: 
\begin{align}
\label{supervised_classic}
&\argmin_{u \in \mathcal{U}} \Phi_{\text{s}}(u;\mathsf{x},\mathsf{y}), \\
& \Phi_{\text{s}}(u;\mathsf{x},\mathsf{y}) = \frac{1}{N} \sum_{j=1}^N \mathcal{L}(\mathcal{G}(u|x_j),y_j) + R(u).
\end{align}
Here \(R: \mathcal{U} \rightarrow \mathbb{R}\) is a regularizer on the 
parameters designed to prevent overfitting or address possible ill-posedness. 
We use the notation \(\mathsf{x} = [x_1,\dots,x_N] \in \mathcal{X}^N\), and
analogously \(\mathsf{y}\),  for concatenation of the data in the input
and output spaces \(\mathcal{X}, \mathcal{Y}\) respectively.

A common choice of regularizer is
\(R(u) = \lambda \|u\|_\mathcal{U}^2\) where \(\lambda \in \mathbb{R}\) is a tunable parameter. This choice
is often called \textit{weight decay} in the machine learning literature. Other choices, such as sparsity promoting norms, are
also employed; carefully selected choices of the norm can induce desired behavior in the parameters \cite{l1min, computationalinverseproblems}. 
We note also that Monte Carlo approximation is itself a form of regularization
of the minimization task \eqref{eq:min}.

This formulation 
is known as \textit{supervised} learning. Supervised learning is perhaps the most common type of machine learning with numerous applications 
including image/video classification, object detection, and natural language processing \cite{alexnet,nlpfoundation,seqtoseq}.

\subsubsection{Semi-Supervised Learning}
\label{semisupervisedlearning}

Suppose now that we only observe a small portion of the data \(\mathsf{y}\) in the image
space; specifically we assume that we have access to data
\(\{x_j\}_{j \in Z}\), \(\{y_j\}_{j \in Z'}\)
where \(x_j \in \mathcal{X}, y_j \in \mathcal{Y}\), 
\(Z = \{1,\dots,N\}\) and where \(Z' \subset Z\) with \(|Z'| \ll |Z|\). Clearly this can be turned into supervised 
learning by ignoring all data indexed by \(Z \setminus Z'\), but we would like to take advantage of all the information known to us.
Often the data in \(\mathcal{X}\) is known as unlabeled data,
and the data in \(\mathcal{Y}\) as labeled data; in particular
the labeled data is often in the form of categories. We use
the terms labeled and unlabeled in general, regardless or whether
the data in \(\mathcal{Y}\) is categorical; however some of our illustrative
discussion below will focus on the binary classification problem.
The objective is to assign a label
\(y_j\) to every \(j \in Z\). This problem is known as \textit{semi-supervised} 
learning.

One approach to the problem is to seek to minimize 
\begin{align}
\label{semi_supervised_classic}
&\argmin_{u \in \mathcal{U}} \Phi_{\text{ss}}(u;\mathsf{x},\mathsf{y}) \\ 
&\Phi_{\text{ss}}(u;\mathsf{x},\mathsf{y}) = \frac{1}{|Z'|} \sum_{j \in Z'} \mathcal{L}(\mathcal{G}(u|x_j),y_j) + R(u;\mathsf{x})
\end{align}
where the regularizer \(R(u;\mathsf{x})\) 
may use the unlabeled data in \(Z\backslash Z'\), 
but the loss term involves only labeled data in \(Z'\).

There are a variety of ways in which one can construct the regularizer \(R(u;\mathsf{x})\) including graph-based and low-density separation methods \cite{semisupervised,andreasemi}. In this work, we will study a nonparametric graph approach where we think of \(Z\) as indexing the nodes on a graph. To illustrate ideas we consider the case
of binary outputs, take \(\mathcal{Y} = \mathbb{R}\) and restrict attention
to mappings \(\mathcal{G}(u|\cdot)\) which take values in \(\{-1,1\}\);
we sometimes abuse notation and simply take \(\mathcal{Y} = \{-1,1\}\), so that \(\mathcal{Y}\)
is no longer a Hilbert space.
We assume that \(\mathcal{U}\) comprises real-valued functions on the
nodes \(Z\) of the graph, equivalently vectors in \(\mathbb{R}^N\). 
We specify
that \(\mathcal{G}(u|j) = \text{sgn}(u(j))\) for all \(j \in Z\),
and take, for example, the probit or logistic loss 
function \cite{RW,semisupervised}.
Once we have found an optimal parameter value for
\(u: Z \rightarrow \mathbb{R}\), 
application of \(\mathcal{G}\) to \(u\) will return a labeling over all nodes
\(j\) in \(Z\). In order to use all the unlabeled data we introduce edge
weights which measure affinities between nodes of a graph with vertices \(Z\),
by means of a weight function on \(\mathcal{X} \times \mathcal{X}\). We then compute the graph Laplacian \(\mathsf{L}(\mathsf{x})\) and use it to define 
a regularizer in the form
\[R(u;\mathsf{x}) = \langle u, (\mathsf{L}(\mathsf{x}) + \tau^2 I)^{\alpha} u \rangle_{\mathbb{R}^N}.\]
Here \(I\) is the identity operator, and \(\tau, \alpha \in \mathbb{R}\) with \(\alpha > 0\) are tunable parameters. Further details of this method are in the following section. Applications of semi-supervised learning can include any situation where data in the image space \(\mathcal {Y}\) is hard to come by, for example because it requires expert human labeling; a specific example is medical imaging \cite{medicalimaging}.

\subsubsection{Online Learning}
\label{onlinelearning}

Our third and final class of learning problems concerns situations where samples of data are presented to us sequentially and we aim to refine our choice of parameters at each step. 
We thus have the supervised learning problem \eqref{supervised_classic} and we aim to
solve it sequentially as each pair of data points \(\{x_j,y_j\}\) is
delivered.
To facilitate cheap algorithms we impose a Markovian structure
in which we are allowed to use only the current data sample, as well as our 
previous estimate of the parameters, when searching for the new estimate. 
We look for a sequence
\(\{u_j\}_{j=1}^\infty \subset \mathcal{U}\) such that \(u_j \rightarrow u^*\) as \(j \rightarrow \infty\) where, in the perfect scenario, \(u^*\) will
be a minimizer of the limiting learning problem \eqref{eq:min}.
To make the problem Markovian, we may formulate it as the following minimization task
\begin{align}
\label{online_classic}
&u_j = \argmin_{u \in \mathcal{U}} \Phi_{\text{o}}(u,u_{j-1};x_j, y_j) \\
&\Phi_{\text{o}}(u,u_{j-1};x_j, y_j) = \mathcal{L}(\mathcal{G}(u|x_j),y_j) + R(u;u_{j-1})
\end{align}
where \(R\) is again a regularizer that could enforce a closeness condition between consecutive parameter estimates, such as 
\[R(u;u_{j-1}) = \lambda  \|u - u_{j-1}\|^2_\mathcal{U}.\] 
Furthermore this regularization need not be this explicit, but could rather be included in the method chosen to solve \eqref{online_classic}. For example if we use an iterative method for the minimization, we could simply start the iteration at \(u_{j-1}\). 

This formulation of supervised learning is known as \textit{online} learning. It can be viewed as reducing computational cost as a cheaper, sequential way of estimating a solution to \eqref{eq:min}; or it may be necessitated by the 
sequential manner in which data is acquired.  

\subsection{Inverse Problems}
\label{inverseproblems}

The preceding discussion demonstrates that, 
while the goal of learning is to find a mapping 
which generalizes across the whole distribution of 
possible data, in practice, we are severely restricted by only having access to a finite data set. Namely formulations \eqref{supervised_classic}, \eqref{semi_supervised_classic}, \eqref{online_classic} can be stated for any
input-output pair data set with no reference to \(\mathbb{P}(x,y)\) by simply assuming that there exists some function in our model class that will relate the two.
In fact, since \(\mathcal{L}\) is positive-definite, its dependence also washes out when ones takes a function approximation point of view. To make this precise, consider the inverse problem of finding \(u \in \mathcal{U}\) such that
\begin{equation}
\label{static_inverse}
\mathsf{y} = \mathsf{G}(u|\mathsf{x}) + \eta;
\end{equation}
here \(\mathsf{G}(u|\mathsf{x}) = [\mathcal{G}(u|x_1),\dots,\mathcal{G}(u|x_N)]\) is a concatenation and \(\eta \sim \pi\) is a \(\mathcal{Y}^N\)-valued random variable distributed according to a measure \(\pi\) that models possible noise in the data, or model error. In order to facilitate a Bayesian formulation of this
inverse problem  we let \(\mu_0\) denote a prior probability measure on the parameters \(u\). Then supposing
\begin{align*}
- \log (\pi (\mathsf{y} - \mathsf{G}(u|\mathsf{x}))) &\propto \sum_{j=1}^N \mathcal{L}(\mathcal{G}(u|x_j),y_j) \\
- \log (\mu_0(u)) &\propto R(u)
\end{align*}
we see that \eqref{supervised_classic} corresponds to the standard MAP 
estimator arising from a Bayesian formulation of \eqref{static_inverse}. The semi-supervised learning problem \eqref{semi_supervised_classic} can also be viewed as a MAP estimator by restricting \eqref{static_inverse} to \(Z'\) and using \(\mathsf{x}\) to build \(\mu_0\). This is the perspective we take in this work and we illustrate with an example for each type of problem. \\

\begin{example}
Suppose that \(\mathcal{Y}\) and \(\mathcal{U}\) are Euclidean spaces and let \(\pi = \mathcal{N}(0,\Gamma)\) and \(\mu_0 = \mathcal{N}(0, \Sigma)\) be Gaussian with positive-definite covariances \(\Gamma, \Sigma\) where \(\Gamma\) is block-diagonal with \(N\) identical blocks \(\Gamma_0\). Computing the MAP estimator of \eqref{static_inverse}, we obtain that
\(\mathcal{L}(y',y) = \|y - y'\|_{\Gamma_0}^2\) and \(R(u) = \|u\|_\Sigma^2\). 
\end{example}
\*
\begin{example}
\label{ex:2.2}
Suppose that \(\mathcal{U} = \mathbb{R}^N\) and \(\mathcal{Y} = \mathbb{R}\) with the data \(y_j = \pm 1\) \(\forall j \in Z'\). We will take the model class to be a single function \(\mathcal{G}: \mathbb{R}^N \times Z \rightarrow \mathbb{R}\) depending only on the index of each data point and defined by \(\mathcal{G}(u|j) = \text{sgn}(u_j)\). As mentioned, we think of \(Z\) as the nodes on a graph and construct the edge set \(E = (e_{ij}) = \eta(x_i, x_j)\) where \(\eta: \mathcal{X} \times \mathcal{X} \rightarrow \mathbb{R}_+\) is a symmetric function. This allows construction of the associated graph Laplacian \(\mathsf{L}(\mathsf{x})\). We shift it and remove its null space and consider the symmetric, positive-definite operator \(C= (\mathsf{L}(\mathsf{x}) + \tau^2I)^{-\alpha}\) from which we can define the Gaussian measure \(\mu_0 = \mathcal{N}(0,C)\). For details on why this construction defines a reasonable prior we refer to \cite{semisupervised}. 
Letting \(\pi = \mathcal{N}(0, \frac{1}{\gamma^2} I)\), we restrict \eqref{static_inverse} to the inverse problem
\[y_j = \mathcal{G}(u|j) + \eta_j \quad \forall j \in Z'.\]
With the given definitions, letting \(\gamma^2 = |Z'|\), the associated MAP estimator has the form of \eqref{semi_supervised_classic}, namely
\[\frac{1}{|Z'|} \sum_{j \in Z'} |\mathcal{G}(u|j) - y_j|^2 + \langle u, C^{-1} u \rangle_{\mathbb{R}^N}.\]
The infimum for this functional is not achieved \cite{geometriclevelset}, but the
ensemble based methods we employ to solve the problem implicitly
apply a further regularization which circumvents this issue.
\end{example}
\*
\begin{example}
Lastly we turn to the online learning problem (\ref{online_classic}). We assume that there is some unobserved, fixed in time parameter of our model that will perfectly match the observed data up to a noise term. Our goal is to estimate this parameter sequentially. Namely, we consider the stochastic dynamical system,
\begin{align}
\label{dynamic_inverse}
\begin{split}
u_{j+1} &= u_j \\
y_{j+1} &= \mathcal{G}(u_{j+1}|x_{j+1}) + \eta_{j+1} 
\end{split}
\end{align}
where the sequence \(\{\eta_j\}\) are \(\mathcal{Y}\)-valued i.i.d. random variables that are also independent from the data. This is an instance of the classic filtering problem considered in data assimilation \cite{dataassim}. We may view this as solving an inverse problem at each fixed time with increasingly 
strong prior information as time unrolls. 
With the appropriate assumptions on the prior and the noise model, we may again view \eqref{online_classic} as the MAP estimators of each fixed inverse problem. Thus we 
may consider all problems presented here in the general framework of (\ref{static_inverse}).
\end{example}

\section{Approximation Architectures}
\label{approxarchitectures}

In this section, we outline the approximation architectures that 
we will use to solve the three machine learning tasks outlined in the
preceding section. For supervised and online learning these amount
to specifying the dependence of \(\mathcal{G}\) on \(u\); 
for semi-supervised
learning this corresponds to determining a basis in which to seek
the parameter \(u\). We do not give further details for the semi-supervised case
as our numerics fit in the context of Example \ref{ex:2.2}, 
but we refer the reader to \cite{semisupervised} for a detailed discussion.

Subsection \ref{feedforwardneuralnetworks} details feed-forward neural 
networks with subsections \ref{densenetworks} and \ref{convolutionalnetworks} showing
the parameterizations of dense and convolutional networks respectively.
Subsection \ref{recurrentneuralnetworks} presents basic recurrent neural networks.

\subsection{Feed-Forward Neural Networks}
\label{feedforwardneuralnetworks}

Feed-forward neural networks are a parametric model class defined as discrete time, nonautonomous, semi-dynamical systems of an unusual type. Each map in the composition takes a specific parametrization and can change the dimension of its input while the whole system is computed only up to a fixed time horizon. To make this precise, we will assume \(\mathcal{X} = \mathbb{R}^d\), \(\mathcal{Y} = \mathbb{R}^m\) and define a neural network with \(n \in \mathbb{N}\) hidden layers as the composition
\[\mathcal{G}(u|x) = S \circ A \circ F_{n-1} \circ \dots \circ F_0 \circ x \] 
where \(d_0 = d\) and \(F_j \in C(\mathbb{R}^{d_j}, \mathbb{R}^{d_{j+1}}), n=0, \dots, n-1\) are nonlinear maps, referred to as \textit{layers}, depending on parameters \(\theta_0, \dots, \theta_{n-1}\) respectively, \(A: \mathbb{R}^{d_n} \rightarrow \mathbb{R}^m\) is an affine map with parameters \(\theta_n\), and \(u = [\theta_0, \dots, \theta_n]\) is a concatenation.  The map \(S: \mathbb{R}^{m} \rightarrow V \subseteq \mathbb{R}^m\) is fixed and thought of as a projection or thresholding done to move the output to the appropriate subset of data space. The choice of \(S\) is dependent on the problem at hand. If we are considering a regression task and \(V = \mathbb{R}^m\) then \(S\) can simply be taken as the identity. On the other hand, if we are considering a classification task and \(V = \mathbb{P}^m\), the set of probability vectors in \(\mathbb{R}^m\), 
then \(S\) is often taken to be the softmax function defined as 
\[S(w) = \frac{1}{\sum_{j=1}^m e^{w_j}} (e^{w_1},\dots,e^{w_m}).\] 
From this perspective, the neural network approximates a categorical distribution of the input data and the softmax arises naturally 
as the canonical response function of the categorical distribution (when viewed as belonging to the exponential family of distributions) \cite{GLM, categoricaldist}. If we have some specific bounds for the output data, for example \(V = [-1,1]^m\) then \(S\) can be a point-wise hyperbolic tangent. 

What makes this dynamic unusual is the fact that each map can change the dimension of its input unlike a standard dynamical system which operates on a fixed metric space. However, note that the sequence of dimension changes \(d_1,\dots,d_n\) is simply a modeling choice that we may alter. Thus let \(d_{\max} = \max \{d_0,\dots,d_n\}\) and consider the solution map \(\phi: \mathbb{N}_0 \times \mathbb{N}_0 \times \mathbb{R}^{d_{\max}} \rightarrow \mathbb{R}^{d_{\max}}\) generated 
by the nonautonomous difference equation
\[z_{k+1} = F_k(z_k)\]
where each map \(F_k \in C(\mathbb{R}^{d_{\max}}, \mathbb{R}^{d_{\max}})\) is such that \(\dim \text{Im}(F_k) \leq d_{k+1}\); then
\(\phi(n,m,x)\) is \(z_n\) given that \(z_m=x\). We may then define a neural network as
\[\mathcal{G}(u|x) = S \circ A \circ \phi (n, 0, \mathcal{P}x)\]
where \(\mathcal{P} : \mathbb{R}^{d} \rightarrow \mathbb{R}^{d_{\max}}\) is a projection operator and \(A: \mathbb{R}^{d_\text{max}} \rightarrow \mathbb{R}^m\) 
is again an affine map. While this definition is mathematically satisfying and 
potentially useful for theoretical
analysis as there is a vast literature on nonautonomus semi-dynamical systems \cite{nonautonomousdynamics}, in practice, it is more useful to think of each map as changing the dimension of its input.
This is because it allows us to work with parameterizations that explicitly enforce the constraint on the dimension of the image. We take this point of view for the rest of this section to illustrate the practical uses of neural networks.

\subsubsection{Dense Networks}
\label{densenetworks}

A key feature of neural networks is the specific parametrization of each map \(F_k\). In the most 
basic case, each \(F_k\) is an affine map followed by a point-wise nonlinearity, in particular,
\[F_k(z_k) = \sigma(W_k z_k + b_k)\]
where \(W_k \in \mathbb{R}^{d_{k+1} \times d_k}\), \(b_k \in \mathbb{R}^{d_{k+1}}\) are the parameters i.e. \(\theta_k = [W_k, b_k]\) and \(\sigma \in C(\mathbb{R}, \mathbb{R})\) is non-constant, bounded, and monotonically increasing; we extend \(\sigma\) to a function
on \(\mathbb{R}^d\) by defining it point-wise as \(\sigma(u)_j=\sigma(u_j)\) for
any vector \(u \in \mathbb{R}^d\). This layer type is referred to as \textit{dense}, or fully-connected, because each entry in \(W_k\) is a parameter with no global sparsity assumptions and hence we can end up with a dense matrix. A neural network with only this type of layer is called dense or fully-connected (DNN).

The nonlinearity \(\sigma\), called the \textit{activation function}, is a design choice and usually does not vary from layer to layer. Some popular choices include the sigmoid, the hyperbolic tangent, or the rectified linear unit (ReLU) defined by \(\sigma(q) = \max \{0,q\}\). Note that ReLU is unbounded and hence does not satisfy the assumptions for the classical universal approximation theorem \cite{universalapproximation}, but it has shown tremendous numerical success when the associated inverse problem is solved via backpropagation (method of adjoints) \cite{originalrelu}.

\subsubsection{Convolutional Networks}
\label{convolutionalnetworks}

Instead of seeking the full representation of a linear operator at each time step, we may consider looking only for the parameters associated to a pre-specified type of operator.
Namely we consider
\[F_k(z_k) = \sigma (W(s_k)z_k + b_k)\]
where \(W\) can be fully specified by the parameter \(s_k\). The most commonly considered operator is the one arising from a discrete convolution \cite{originalconvolution}. We consider the input
\(z_k\) as a function on the integers with period \(d_k\) then we may define \(W(s_k)\) as the circulant matrix arising as the kernel of the discrete circular convolution
with convolution operator \(s_k\). Exact construction of the operator \(W\) is a modeling choice as one can pick exactly which blocks of \(z_k\) to compute the convolution over. Usually, even with maximally overlapping blocks, the operation is dimension reducing, but can be made dimension preserving, or even expanding, by appending zero entries to \(z_k\). This is called \textit{padding}. For brevity, we omit exact descriptions of such details and refer the reader to \cite{deeplearningbook}. The parameter \(s_k\) is known as the \textit{stencil}. Neural networks following this construction are called \textit{convolutional} (CNN).

In practice, a CNN computes a linear combination of many convolutions at each time step, namely 
\[F_k^{(j)}(z_k) = \sigma \left ( \sum_{m=1}^{M_k} W(s_k^{(j, m)})z_k^{(m)} + b_k^{(j)} \right )\]
for \(j=1,2,\dots,M_{k+1}\) where \(z_k = [z_k^{(1)}, \dots, z_k^{(M_k)}]\) with each entry known as a \textit{channel} and \(M_k = 1\) if no convolutions were computed at the previous iteration. Finally we define \(F_k(z_k) = [F_k^{(1)}(z_k), \dots, F_k^{(M_{k+1})}(z_k)]\). The number of channels at each time step, the
integer \(M_{k+1}\), is a design choice which, along with the choice for the size of the stencils \(s_k^{(j,m)}\), the dimension of the input, and the design of \(W\) determine the dimension of the image space \(d_{k+1}\) for the map \(F_k\).  

When employing convolutions, it is standard practice to sometimes place maps which compute certain statistics from the convolution. These operations are commonly referred to as \textit{pooling} \cite{maxpooling}. Perhaps the most common such operation is known as max-pooling. To illustrate suppose \([F^{(1)}_k, \dots, F^{(M_{k+1})}_k]\) are the \(M_{k+1}\) channels computed as the output of a convolution (dropping the \(z_k\) dependence for notational convenience). In this
context, it is helpful to change perspective slightly and view each 
\(F^{(j)}_k\) as a matrix whose concatenation gives the vector
\(F_k\). Each of these matrices is a two-dimensional grid whose value at 
each point represents a linear combination of convolutions each computed 
at that spatial 
location. We define a maximum-based, block-subsampling operation
\[(p^{(j)}_k)_{il} = \max_{q \in \{1,\dots,H_1\}} \max_{v \in \{1,\dots,H_2\}} (F^{(j)}_k)_{\alpha (i-1) + q, \beta (l-1) + v}\]
where the tuple \((H_1, H_2) \in \mathbb{N}^2\) is called the \textit{pooling kernel} and the tuple \((\alpha, \beta) \in \mathbb{N}^2\) is called the \textit{stride}, each a design choice for the operation. It is common practice to take \(H_1 = H_2 = \alpha = \beta\). We then define the full layer as \(F_k(z_k) = [p^{(1)}_k(z_k), \dots, p^{(M_{k+1})}_k(z_k)]\). There are other standard choices for pooling operations including average pooling, \(\ell_p\)-pooling, fractional max-pooling, and adaptive max-pooling 
where each of the respective names are suggestive of the operation being performed; details may be found in \cite{adaptivepooling, fractionalpooling, lppooling, averagepooling}. Note that pooling operations are dimension reducing and are usually thought of as a way of extracting the most important information from a convolution. When one chooses the kernel \((H_1,H_2)\) such that \(F^{(j)}_k \in \mathbb{R}^{H_1 \times H_2}\), the per channel output of the pooling is a scalar and the operation is called \textit{global pooling}. 

\begin{figure}[t]
\centering
\begin{tabular}{ |l|l|l| }
\hline
\multicolumn{3}{ |c| }{Convolutional Neural Network} \\
\hline
\textbf{Map} & \textbf{Type} & \textbf{Notation} \\ \hline
\multirow{2}{*}{\(F_0: \mathbb{R}^{28 \times 28} \rightarrow \mathbb{R}^{32 \times 24 \times 24}\)} & Conv 32x5x5  & \(M_0=1, M_1=32, s_0^{(j,m)} \in \mathbb{R}^{5 \times 5}\) \\ 
& & \(j \in \{1,\dots,32\}, m=\{1\}\) \\ \hline 
\multirow{3}{*}{\(F_1: \mathbb{R}^{32 \times 24 \times 24} \rightarrow \mathbb{R}^{32 \times 10 \times 10}\)} & Conv 32x5x5 & \(M_2=32, s_1^{(j,m)} \in \mathbb{R}^{5 \times 5}\) \\
 & MaxPool 2x2 & \(j \in \{1,\dots,32\}, m \in \{1,\dots,32\}\) \\
 &  & \(H_1 = H_2 = 2\) (\(\alpha = \beta = 2\)) \\ \hline
\multirow{2}{*}{\(F_2: : \mathbb{R}^{32 \times 10 \times 10} \rightarrow \mathbb{R}^{64 \times 6 \times 6}\)} & Conv 64x5x5 & \(M_3=64, s_2^{(j,m)} \in \mathbb{R}^{5 \times 5}\) \\
 & & \(j \in \{1,\dots,64\}, m \in \{1,\dots,32\}\) \\ \hline
 \multirow{3}{*}{\(F_3: \mathbb{R}^{64 \times 6 \times 6} \rightarrow \mathbb{R}^{64}\)} & Conv 64x5x5 & \(M_4=64, s_3^{(j,m)} \in \mathbb{R}^{5 \times 5}\) \\
 & MaxPool 2x2 & \(j \in \{1,\dots,64\}, m \in \{1,\dots,64\}\) \\
 & (global) & \(H_1 = H_2 = 2\) \\ \hline
\(F_4: \mathbb{R}^{64} \rightarrow \mathbb{R}^{500}\) & FC-500 & \(W_4 \in \mathbb{R}^{500 \times 64}, b_4 \in \mathbb{R}^{500}\) \\ \hline
\(A: \mathbb{R}^{500} \rightarrow \mathbb{R}^{10}\) & FC-10 & \(W_5 \in \mathbb{R}^{10 \times 500}, b_5 \in \mathbb{R}^{10}\) \\ \hline
\(S: \mathbb{R}^{10} \rightarrow \mathbb{R}^{10}\) & Softmax & \(S(w) = \frac{1}{\sum_{j=1}^{10} e^{w_j}} (e^{w_1},\dots,e^{w_{10}})\) \\ 
\hline
\end{tabular}
\caption{A four layer convolutional neural network for classifying images in the MNIST data set. The middle column shows a description typical of the 
machine learning literature. The other two columns connect this jargon to the notation presented here. No padding is added and the convolutions are computed over maximally overlapping blocks (stride of one). The nonlinearity \(\sigma\) is the ReLU and is the same for every layer.}
\label{ConvDemoDesign}
\end{figure}

\begin{figure}[t]
    \centering
    \includegraphics[width=\textwidth]{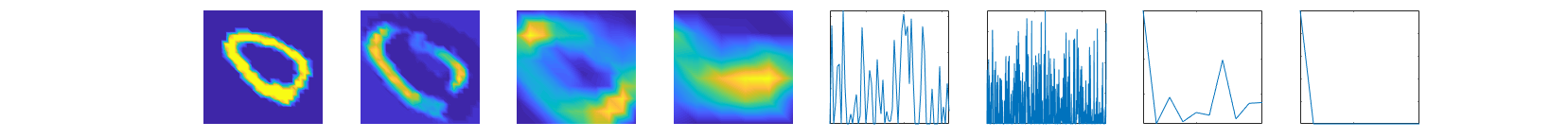}
    \caption{Output of each map from left to right of the convolutinal neural network shown in Figure \ref{ConvDemoDesign}. The left most image is the
    input and the next three images show a single randomly selected channel from the outputs of \(F_0,F_1,F_2\) respectively. The outputs of \(F_3,F_4,A,S\) are 
    vectors shown respectively in the four subsequent plots. We see that with high probability the network determines that the image belongs to the first class (0) which is 
correct.}
    \label{ConvDemoOutput}
\end{figure}

Designs of feed-forward neural networks usually employ both convolutional (with and without pooling) and dense layers. While the success of convolutional networks has mostly come from image classification or object detection tasks \cite{alexnet}, they  can be useful for any data with spatial correlations \cite{convtimeseries,deeplearningbook}. To connect the complex notation presented in this section with the standard in machine learning literature, we will give an example of a deep convolutional neural network. We consider the task of classifying images of hand-written digits given in the MNIST dataset \cite{mnist}. These are \(28 \times 28\) grayscale images of which there are \(N=60,000\) and 10 overall classes \(\{0,\dots,9\}\) hence we consider \(\mathcal{X} = \mathbb{R}^{28 \times 28} \cong \mathbb{R}^{784}\) and \(\mathcal{Y}\) the space of probability vectors over \(\mathbb{R}^{10}\). Figure \ref{ConvDemoDesign} show a typical construction of a deep convolutional neural network for this task. The word deep is generally reserved for models with \(n > 3\). Once the 
model has been fit, Figure \ref{ConvDemoOutput} shows the output of each map on an example image. Starting with the digitized digit \(0\),
the model computes its important features, through a sequence of operations
involving convolutional layers,
culminating in the second to last plot, the output of the affine map \(A\). 
This plot shows model determining that the most
likely digit is \(0\), but also giving substantial probability weight on 
the digit \(6\). This makes sense, as the digits \(0\) and \(6\) can look quite similar, especially when hand-written.
Once the softmax is taken (because it exponentiates), the probability of the image being a \(6\) is essentially washed out, as shown
in the last plot. This is a short-coming of the softmax as it may
not accurately retain the confidence of the model's prediction. We stipulate 
that this may be a reason for the emergence of highly-confident adversarial examples \cite{adversarial_original,adversarial}, but do not pursue suitable modifications
in this work.

\subsection{Recurrent Neural Networks}
\label{recurrentneuralnetworks}

Recurrent neural networks are models for time-series data defined as discrete time, nonautonomous, semi-dynamical systems 
that are parametrized by feed-forward neural networks. To make this precise, we first define a layer of two-inputs
simply as the sum of two affine maps followed by a point-wise nonlinearity, namely for \(j=1,\dots,n\) define 
\(F_{\theta_j}: \mathbb{R}^{d_h} \times \mathbb{R}^d \rightarrow \mathbb{R}^{d_h}\) by
\[
F_{\theta_j}(z, q) = \sigma(W^{(j)}_h z + b^{(j)}_h + W^{(j)}_x q + b^{(j)}_x)
\]
where \(W^{(j)}_h \in \mathbb{R}^{d_h \times d_h}\), \(W^{(j)}_x \in \mathbb{R}^{d_h \times d}\) and \(b^{(j)}_h, b^{(j)}_x \in \mathbb{R}^{d_h}\);
the parameters are then given by the concatenation
\(\theta_j = [W^{(j)}_h, W^{(j)}_x, b^{(j)}_h, b^{(j)}_x]\). 
The dimension \(d_h\) is
a design choice that we can pick on a per-problem basis. Now define the map
\(F_\theta: \mathbb{R}^{d_h} \times \mathbb{R}^d \rightarrow \mathbb{R}^{d_h}\) by composing along the first component
\[F_\theta(z,q) = F_{\theta_n}(F_{\theta_{n-1}}(\dots F_{\theta_1}(z,q), \dots q), q),q)\]
where \(\theta = [\theta_1,\dots,\theta_n]\) is a concatenation. Now suppose \(x_0,\dots,x_{T-1} \in \mathbb{R}^d\) is an observed
time series and define the dynamic
\[h_{t+1} = F_\theta(h_t, x_t)\]
up to time \(t = T\). We can think of this as a nonautonomous, 
semi-dynamical system on \(\mathbb{R}^{d_h}\) with parameter \(x = [x_0,\dots,x_{T-1}]\). Let \(\phi: \{0,\dots,T\} \times \mathbb{R}^{T \times d} \times \mathbb{R}^{d_h} \rightarrow \mathbb{R}^{d_h}\) be the solution
map generated by this difference equation. We can finally define a recurrent neural network \(\mathcal{G}(u|\cdot): \mathbb{R}^{T \times d} \rightarrow V \subseteq \mathbb{R}^{T \times d}\)
by 
\[\mathcal{G}(u|x) = \begin{bmatrix} S(A_1 \circ \phi(1,x,h_0)) \\ S(A_2 \circ \phi(2,x,h_0)) \\ \vdots \\ S(A_T \circ \phi(T,x,h_0)) \end{bmatrix}\]
where \(A_1,\dots,A_T\) are affine maps, \(S\) is a thresholding (such as softmax)
as previously discussed, and \(u\) a concatenation of the parameters \(\theta\) as well
as the parameters for all of the affine maps. Usually ones takes \(h_0 = 0\), but randomly generated initial conditions are also used in practice. 

The construction presented here is the most basic recurrent neural network. Many others architectures such as Long Short-Term Memory (LSTM), recursive, and bi-recurrent 
networks have been proposed in the literature \cite{reccurentoriginal,originallstm,anothernn,deeplearningbook}, but they are all slight modifications to the above dynamic. 
These architectures can be used as sequence to sequence maps, or, if we only consider the output at the last 
time that is \(S(A_T \circ \phi(T,x,h_0))\), as predicting \(x_T\) or classifying the sequence \(x_0,\dots,x_{T-1}\). We refer the reader to \cite{trainingrnns}
for an overview of the applications of recurrent neural networks.

\section{Algorithms}
\label{algorithms}

Subsection \ref{lossfunction} describes the choice of loss function. 
Subsection \ref{gradientbasedoptimization} outlines the state-of-the-art
derivative based optimization, with subsection \ref{iterativetechnique} 
presenting the algorithms and subsection \ref{initializationandnormalization}
presenting tricks for better convergence. Subsection \ref{ensemblekalmaninversion}
defines the EKI method, with subsequent subsections presenting our various
modifications.

\subsection{Loss Function}
\label{lossfunction}

Before delving into the specifics of optimization methods used,
we discuss the general choice of loss function \(\mathcal{L}\). While the machine learning literature contains a wide variety of loss functions that are 
designed for specific problems, there are two which are most commonly used and considered first when tackling any regression and classification
problems respectively, and on which we focus our work in this paper. 
For regression tasks, the squared-error
loss 
\[\mathcal{L}(y',y) = \|y - y'\|_\mathcal{Y}^2\] 
is standard and is well known to the inverse problems community; it arises from an additive Gaussian noise model. 
When the task at hand is classification, the standard choice of loss is the cross-entropy
\[\mathcal{L}(y',y) = - \langle y, \log y' \rangle_\mathcal{Y},\]
with the \(\log\) computed point-wise and where we consider \(\mathcal{Y} = \mathbb{R}^m\). This loss is well-defined on the space \(\mathbb{P}^m_0 \times \mathbb{P}^m\). It is consistent with the the projection map \(S\) of the neural network model being the softmax as \(\text{Im}(S) = \mathbb{P}^m_0\). A simple Lagrange multiplier argument shows that 
\(\mathcal{L}\) is indeed infimized over \(\mathbb{P}^m_0\) by 
sequence \(y' \rightarrow y\) and hence the loss is consistent with what we want our model output to be. \footnote{Note that the infimum is not, in general, 
attained in \(\mathbb{P}^m_0\) as defined, because
perfectly labeled data may take the form
\(\{y \in \mathbb{R}^m \; | \; \exists! j \text{ s.t. } y_j = 1, y_k = 0 \; \forall k \neq j\}\) which is in the closure of \(\mathbb{P}^m_0\) but not in 
\(\mathbb{P}^m_0\) itself.} From a modeling perspective,
the choice of softmax as the output layer has some drawbacks
as it only allows us to asymptotically match the data. However it is a
good choice if the cross-entropy loss is used to solve the problem; indeed, in practice, the softmax along with the cross-entropy loss has seen the best numerical results when compared to other choices of thresholding/loss pairs \cite{deeplearningbook}. 

The interpretation of the cross-entropy loss is to think of our model as approximating a categorical distribution over the input data and, to get this approximation, we want to minimize its Shannon cross-entropy with respect to the data. Note, however, that there is no additive noise model for which this loss appears in the associated MAP estimator simply because 
\(\mathcal{L}\) cannot be written purely as a function of the residual \(y - y'\).

\subsection{Gradient Based Optimization}
\label{gradientbasedoptimization}

\subsubsection{The Iterative Technique}
\label{iterativetechnique}

The current state of the art for solving optimization problems of the form (\ref{supervised_classic}), (\ref{semi_supervised_classic}), (\ref{online_classic})
is  based around the use of stochastic gradient descent (SGD) \cite{robbinsmonroe,anotherrobbinsmonroe,backprop}. We will 
describe these methods starting from a continuous time viewpoint, for 
pedagogical clarity. In particular, we think of the unknown 
parameter \(u \in \mathcal{U}\) as the large time limit of a smooth 
function of time \(u: [0, \infty) \rightarrow \mathcal{U}\). Let \(\Phi(u; \mathsf{x}, \mathsf{y}) = \Phi_{\text{s}}(u; \mathsf{x}, \mathsf{y}) \text{ or } \Phi_{\text{ss}}(u; \mathsf{x}, \mathsf{y})\) then gradient descent imposes the dynamic
\begin{equation}
\label{gradientflow}
\dot{u} = - \nabla \Phi (u; \mathsf{x}, \mathsf{y}), \quad u(0) = u_0
\end{equation}
which moves the parameter in the steepest descent direction with respect to
the regularized loss function, and hence will converge to a local minimum for Lebesgue
almost all initial data, leading to bounded trajectories \cite{lee2016gradient,stuart1998dynamical}. 

For the practical implementations of this approach in machine learning, 
a number of adaptations are made. First the ODE is discretized in time, 
typically by a forward Euler scheme; the time-step is referred to as
the \textit{learning rate}. The time-step is often, 
but not always, chosen to be a decreasing function of the
iteration step \cite{dieuleveut2016nonparametric,robbinsmonroe}. 
Secondly, at each step of the iteration, only a subset of
the data is used to approximate the full gradient. In the supervised case, for example,
\[\Phi_\text{s}(u;\mathsf{x},\mathsf{y}) \approx \frac{1}{N'} \sum_{j \in B_{N'}} \mathcal{L}(\mathcal{G}(u|x_j),y_j) + R(u)\]
where \(B_{N'} \subset \{1,\dots,N\}\) is a random subset of cardinality \(N'\) usually with \(N' \ll N\).
A new \(B_{N'}\) is drawn at each step of the Euler scheme without replacement until the full dataset has been exhausted.
One such cycle through all of the data is called an \textit{epoch}. The number of epochs it takes to train a model
varies significantly based on the model and data at hand but is usually within the range 10 to 500.
This idea, called \textit{mini-batching}, leads to the terminology 
\textit{stochastic gradient descent (SGD)}. 
Recent work has suggested that adding this type of noise helps preferentially guide
the gradient descent towards places in parameter space which generalize better
than standard descent methods \cite{sgdnoise1,sgdnoise2}.

A third variant on basic gradient descent is the
use of momentum-augmented  methods 
utilized to accelerate convergence \cite{originalnesterov}. The 
continuous time dynamic associated with the 
Nesterov momentum method is \cite{odenesterov}
\begin{align}
\label{nesterovflow}
\begin{split}
&\ddot{u} + \frac{3}{t}\dot{u} = - \nabla \Phi (u; \mathsf{x}, \mathsf{y}), \\
&u(0) = u_0, \quad \dot{u}(0)=0.
\end{split}
\end{align}
We note, however, that, while still calling it Nesterov momentum,
this is not the dynamic machine learning practitioners discretize. In fact,
what has come to be called Nesterov momentum in the machine learning literature
is nothing more than a strange discretization of a rescaled version the standard gradient flow \eqref{gradientflow}.
To see this, we note that via the approximation \(k/(k+3) \approx 1 - 3/(k+1)\) one may
discretize \eqref{nesterovflow}, as is done in \cite{odenesterov}, to obtain
\begin{align*}
u_{k+1} &= v_k - h_k \nabla \Phi(v_k; \mathsf{x}, \mathsf{y}) \\
v_{k+1} &= u_{k+1} + \frac{k}{k + 3}(u_{k+1} - u_k)
\end{align*}
with \(v_0 = u_0\) and where the sequence \(\{\sqrt{h_k}\}\) gives the step sizes. However, what machine learning practitioners use is the algorithm
\begin{align*}
u_{k+1} &= v_k - h_k \nabla \Phi(v_k; \mathsf{x}, \mathsf{y}) \\
v_{k+1} &= u_{k+1} + \lambda(u_{k+1} - u_k)
\end{align*}
for some fixed \(\lambda \in (0,1)\). This may be written as
\[u_{k+1} = (1 + \lambda)u_k - \lambda u_{k-1} - h_k \nabla \Phi((1 + \lambda)u_k - \lambda u_{k-1}; \mathsf{x}, \mathsf{y}).\]
If we rearrange and divide by \(h_k\), we can obtain
\[\frac{u_{k+1} - u_k}{h_k} = \lambda \left( \frac{u_k - u_{k_1}}{h_k} \right ) - \nabla \Phi((1 + \lambda)u_k - \lambda u_{k-1}; \mathsf{x}, \mathsf{y}) \]
which is easily seen as a discretization of a rescaled version of the gradient flow \eqref{gradientflow}, namely
\[\dot{u} = - (1 - \lambda)^{-1} \nabla \Phi(u; \mathsf{x}, \mathsf{y}).\]
However, there is a sense in which this discretization introduces
momentum, but only to order \(\mathcal{O}(h_k)\) whereas classically the momentum term would be on the order \(\mathcal{O}(1)\).
To see this, we can again rewrite the discretization as
\[u_{k+1} = 2u_k - u_{k-1} - (1 - \lambda)(u_k - u_{k-1}) - h_k \nabla \Phi((1 + \lambda)u_k - \lambda u_{k-1}; \mathsf{x}, \mathsf{y}).\]
Rearranging this and dividing through by \(h_k\) we can obtain
\[h_k \left ( \frac{u_{k+1} - 2u_k + u_{k-1}}{h_k^2} \right ) + (1 - \lambda) \left ( \frac{u_k - u_{k-1}}{h_k} \right ) = - \nabla \Phi((1 + \lambda)u_k - \lambda u_{k-1}; \mathsf{x}, \mathsf{y})\]
which may be seen as a discretization of the equation 
\[h_t \ddot{u} + (1-\lambda) \dot{u} = - \nabla \Phi(u; \mathsf{x}, \mathsf{y}),\]
where \(h_t\) is a continuous time version of the sequence \(\{h_k\}\);
in particular, if \(h_t=h \ll 1\) we see that whilst momentum is approximately
present, it is only a small effect.

From these variants on continuous time gradient descent
have come a plethora of adaptive first-order optimization methods 
that attempt to solve the learning problem. Some of the more
popular include Adam, RMSProp, and Adagrad \cite{adam,adagrad}. There is no consensus on which methods performs best although some recent work has argued in favor of SGD and 
momentum SGD \cite{marginaladaptive}.

Lastly, the online learning problem (\ref{online_classic}) is also commonly solved via a gradient descent method dubbed online gradient descent (OGD). The dynamic is
\begin{align*}
&\dot{u}_j = - \nabla \Phi_{\text{o}} (u_j, u_{j-1}; x_j, y_j) \\
&u_j(0) = u_{j-1}
\end{align*}
which can be extended to the momentum case in the obvious way. It is common that only a single step of the Euler scheme is computed. The process of letting 
all these ODE(s) evolve in time is called \textit{training}.

\subsubsection{Initialization and Normalization} 
\label{initializationandnormalization}

Two major challenges for the iterative methods presented here are: 1) finding a good starting point \(u_0\) for the dynamic, 
and 2) constraining the distribution of the outputs of each map \(F_k\) 
in the neural network. The first is usually called \textit{initialization},
while the second is termed \textit{normalization}; the two, as we will see, are related.

Historically, initialization was first dealt with using a technique called \textit{layer-wise pretraining} \cite{pretraining}. In this approach the parameters are initialized 
randomly. Then the parameters of all but first layer are held fixed and SGD is used to find the parameters of the first layer. Then all but the parameters of the second layer are held 
fixed and SGD is used to find the parameters of the second layer. Repeating this for all layers yields an estimate \(u_0\) for all the parameters, and this is then used
as an initialization for SGD in a final step called 
\textit{fine-tuning}. Development of new activation functions, namely the ReLU, has allowed simple random initialization (from a carefully designed prior measure) to work 
just as well, making layer-wise pretraining essentially obsolete. There are many proposed strategies in the literature for how one should design this prior \cite{xavier,goodinit}. The main idea behind all of them is to somehow normalize the output mean and variance of each map \(F_k\). One constructs the product probability measure
\[\mu_0 = \mu_0^{(0)} \otimes \mu_0^{(1)} \otimes \cdots \otimes \mu_0^{(n-1)} \otimes \mu_0^{(n)} \]
where each \(\mu_0^{(k)}\) is usually a centered, isotropic probability measure with covariance scaling \(\gamma_k\). Each such measure corresponds to the distribution of the parameters of each respective layer with \(\mu_0^{(n)}\) attached to the parameters of the map \(A\). A common strategy called Xavier initialization \cite{xavier}
proposes that the inverse covariance (precision) is determined  by the
average of the input and output dimensions of each layer:
\[\gamma_k^{-1}=\frac12\bigl(d_k + d_{k+1}\bigr)\]
thus
\[\gamma_k = \frac{2}{d_k + d_{k+1}}.\]
When the layer is convolutional, \(d_k\) and \(d_{k+1}\) are instead taken to be 
the number of input and output channels respectively.
Usually each \(\mu_0^{(k)}\) is then taken to be a centered Gaussian or uniform
probability measure. Once this prior is constructed one initializes SGD by a single draw.

As we have seen, initialization strategies aim to normalize the output distribution of each layer. However, once SGD starts and the parameters change,
this normalization is no longer in place. This issue has been called the \textit{internal covariate shift}. To address it, normalizing parameters 
are introduced after the output of each layer. The most common strategy for finding these parameters is called \textit{batch-normalization} \cite{batchnorm}, which, as
the name implies, relies on computing a mini-batch of the data. To illustrate the idea, suppose \(z_m(x_{k_1}), \dots, z_m(x_{k_B})\) are the outputs 
of the map \(F_{m-1}\) at inputs \(x_{k_1},\dots,x_{k_B}\). We compute the mean and variance
\[\nu_m = \frac{1}{B} \sum_{j=1}^{B} z_m(x_{k_j}); \quad \sigma^2_m = \frac{1}{B} \sum_{j=1}^B \|z_m(x_{k_j}) - \nu_m\|_2^2\]
and normalize these outputs so that the inputs to the map \(F_m\) are
\[\frac{z_m(x_{k_j}) - \nu_m}{\sqrt{\sigma^2_m + \epsilon}} \gamma + \beta\]
where \(\epsilon > 0\) is used for numerical stability while \(\gamma, \beta\) 
are new parameters to be estimated, and are termed the \textit{scale} and \textit{shift} 
respectively; they are found by means of the SGD optimization process.
It is not necessary to introduce the new parameters \(\gamma, \beta\) but is common in practice and, with them, the operation is called \textit{affine batch-normalization}.
When an output has multiple channels, separate normalization is done per channel. During training a running mean of each \(\nu_m, \sigma^2_m\) is kept and the resulting values
are used for the final model. A clear drawback to batch normalization is that it relies on batches of the data to be computed and hence cannot be used in the online setting. Many similar strategies have been proposed \cite{intancenorm,layernorm} with no clear consensus on which works best. Recently a new activation function 
called SeLU \cite{selfnormalizing} has been claimed to perform the required normalization automatically.

\subsection{Ensemble Kalman Inversion}
\label{ensemblekalmaninversion}

The Ensemble Kalman Filter (EnKF) is a method for estimating the state of a stochastic dynamical system from noisy observations \cite{enkforiginal}.
Over the last decade the method has been systematically developed as an
iterative method for solving general inverse problems; in this context, it is
sometimes referred to as Ensemble Kalman Inversion (EKI) \cite{enkfinverse}.
Viewed as a sequential Monte Carlo method \cite{enkfanalysis}, it works on an ensemble of parameter estimates (particles) transforming them 
from the prior into the posterior. Recent work has established, however, that 
unless the forward operator is linear and the additive noise is 
Gaussian \cite{enkfanalysis},
the correct posterior is not obtained \cite{enkfwrongposterior1}. Nevertheless there is ample numerical evidence that shows EKI works very well as a derivative-free
optimization method for nonlinear least-squares problems \cite{goodenkf1,goodenkf2}.
In this paper, we view it purely through the lens of optimization and propose 
several modifications to the method that follow from adopting this perspective
within the context of machine learning problems.

Consider the general inverse problem 
\[\mathsf{y} = \mathsf{G}(u) + \eta \]
where \(\eta \sim \pi=N(0,\Gamma)\) represent noise, and let \(\mu_0\) be a prior 
measure on the parameter \(u\).
Note that the supervised, semi-supervised, 
and online learning problems (\ref{static_inverse}), (\ref{dynamic_inverse}) can be put into this general framework by adjusting 
the number of data points in the concatenations \(\mathsf{y}\), \(\mathsf{x}\) and letting \(\mathsf{x}\) be absorbed into the 
definition of \(\mathsf{G}\). Let \(\{u^{(j)}\}_{j=1}^J \subset \mathcal{U}\) be an ensemble of parameter estimates which we will allow to evolve in time
through interaction with one another and with the data; this ensemble may be initialized by drawing independent samples from \(\mu_0\), for example.
The evolution of  \(u^{(j)}: [0, \infty) \rightarrow \mathcal{U}\) is described
by the EKI dynamic \cite{enkfanalysis}
\begin{align*}
\label{enkf}
\begin{split}
& \dot{u}^{(j)} = -C^{\text{uw}}(u) \Gamma^{-1}(\mathsf{G}(u^{(j)}) - \mathsf{y}),\\
& u^{(j)}(0) = u^{(j)}_0.
\end{split}
\end{align*}
Here 
\[\bar{\mathsf{G}} = \frac{1}{J} \sum_{l=1}^J \mathsf{G}(u^{(l)}), 
\quad \bar{u} = \frac{1}{J} \sum_{l=1}^J u^{(l)}\]
and \(C^{\text{uw}}(u)\) is the empirical cross-covariance operator
\[C^{\text{uw}}(u) = \frac{1}{J} \sum_{j=1}^J (u^{(j)} - \bar{u}) \otimes (\mathsf{G}(u^{(j)}) - \bar{\mathsf{G}}).\]
Thus
\begin{align}
\label{enkf}
\begin{split}
&\dot{u}^{(j)} 
= -\frac{1}{J} \sum_{k=1}^J \langle \mathsf{G}(u^{(k)}) - \bar{\mathsf{G}}, \mathsf{G}(u^{(j)}) - \mathsf{y} \rangle_\Gamma \: u^{(k)},\\
&u^{(j)}(0) = u^{(j)}_0.
\end{split}
\end{align}

Viewing the difference of \(\mathsf{G}(u^{(k)})\) from its mean, appearing
in the left entry of the inner-product, as a projected approximate derivative 
of \(\mathsf{G}\), it is possible to understand \eqref{enkf} as an
approximate gradient descent.

Rigorous analysis of the long-term properties of this dynamic 
for a finite \(J\) are poorly understood except in the case where \(\mathsf{G}(\cdot) = A \cdot\) is linear \cite{enkfanalysis}.
In the linear case, we obtain that \(u^{(j)} \rightarrow u^*\) as \(t \rightarrow \infty\) where \(u^*\) minimizes the functional
\[\Phi(u; \mathsf{y}) = \frac{1}{2} \|\mathsf{y} - Au\|_\Gamma^2\]
in the subspace \(\mathcal{A} = \text{span} \{u^{(j)}_0-\bar{u}\}_{j=1}^J\),
and where \(\bar{u}\) is the mean of the initial ensemble \(\{u^{(j)}_0\}\). 
This follows from the fact that, in the linear case,
we may re-write (\ref{enkf}) as
\[\dot{u}^{(j)} = - C(u) \nabla_u \Phi(u^{(j)}; \mathsf{y})\]
where \(C(u)\) is an empirical covariance operator
\[C(u) = \frac{1}{J} \sum_{j=1}^J (u^{(j)} - \bar{u}) \otimes (u^{(j)} - \bar{u}).\]
Hence each particle performs a gradient descent with respect to \(\Phi\)
and \(C(u)\) projects into the subspace \(\mathcal{A}\). 

To understand the nonlinear setting we use linearization.
Note from (\ref{enkf}) that
\begin{align*}
\dot{u}^{(j)} &= - \frac{1}{J} \sum_{k=1}^J \langle \mathsf{G}(u^{(k)}) - \frac{1}{J} \sum_{l=1}^J \mathsf{G}(u^{(l)}),
\mathsf{G}(u^{(j)}) - \mathsf{y} \rangle_\Gamma \: u^{(k)} \\
&= - \frac{1}{J} \sum_{k=1}^J \langle \mathsf{G}(u^{(k)}) - \frac{1}{J} \sum_{l=1}^J \mathsf{G}(u^{(l)}),
\mathsf{G}(u^{(j)}) - \mathsf{y} \rangle_\Gamma \: (u^{(k)}-\bar{u})\\
&= - \frac{1}{J^2} \sum_{k=1}^J \sum_{l=1}^J \langle \mathsf{G}(u^{(k)}) - \mathsf{G}(u^{(l)}), 
\mathsf{G}(u^{(j)}) - \mathsf{y} \rangle_\Gamma \: (u^{(k)}-\bar{u}).
\end{align*}
Now we linearize on the assumption that the particles are close to
one another, so that
\begin{align*}
\mathsf{G}(u^{(k)}) = \mathsf{G}(u^{(j)} + u^{(k)} - u^{(j)}) &\approx \mathsf{G}(u^{(j)}) + D\mathsf{G}(u^{(j)})
(u^{(k)} - u^{(j)}) \\
\mathsf{G}(u^{(l)}) = \mathsf{G}(u^{(j)} + u^{(l)} - u^{(j)}) &\approx \mathsf{G}(u^{(j)}) + D\mathsf{G}(u^{(j)})
(u^{(l)} - u^{(j)}).
\end{align*}
Here \(D\mathsf{G}\) is the Fr\'echet derivative of \(\mathsf{G}\). With this approximation, we obtain
\begin{align*}
\dot{u}^{(j)} &\approx -\frac{1}{J^2} \sum_{k=1}^J \sum_{l=1}^J \langle D \mathsf{G}^* (u^{(j)})(\mathsf{G}(u^{(j)}) - \mathsf{y}), u^{(k)} - u^{(l)} \rangle_\Gamma (u^{(k)} - \bar{u}) \\
&= - \frac{1}{J} \sum_{k=1}^J \langle D \mathsf{G}^* (u^{(j)})(\mathsf{G}(u^{(j)}) - \mathsf{y}), u^{(k)} - \bar{u} \rangle_\Gamma (u^{(k)} - \bar{u}) \\
&= -C(u) \nabla_u \Phi (u^{(j)}, \mathsf{y})
\end{align*}
where
\[\Phi(u; \mathsf{y}) = \frac{1}{2} \|\mathsf{y} - \mathsf{G}(u)\|_\Gamma^2.\]
This is again just gradient descent with a projection onto the subspace \(\mathcal{A}\). These arguments also motivate the 
interesting variants on EKI proposed in \cite{haber}; indeed the paper
\cite{haber} inspired the organization of the linearization calculations
above.

In summary, the EKI is a methodology which behaves like
gradient descent, but achieves this without computing gradients. Instead
it uses an ensemble and is hence inherently parallelizable. In the context
of machine learning this opens up the possibility of avoiding explicit
backpropagation, and doing so in a manner which is well-adapted to emerging
computer architectures.

\subsubsection{Cross-Entropy Loss}
The previous considerations demonstrate that EKI as typically used
is closely related to minimizing an \(\ell_2\) loss function via
gradient descent. Here we propose a simple modification to the method allowing it to minimize any 
loss function instead of only the squared-error; our primary
motivation is the case of cross-entropy loss. 

Let \(\mathcal{L}(\mathsf{y}', \mathsf{y})\) be any loss function, this may, for example, be the cross entropy
\[\mathcal{L}(\mathsf{y}', \mathsf{y}) = -\frac{1}{N} \langle \mathsf{y}, \log \mathsf{y}' \rangle_{\mathcal{Y}^N}.\]
Now consider the dynamic
\begin{align}
\label{anyloss_enkf}
\begin{split}
\dot{u}^{(j)} &= - C^{\text{uw}}(u) \nabla_{\mathsf{y}'} \mathcal{L}(\mathsf{G}(u^{(j)}), \mathsf{y}) \\
&= -\frac{1}{J} \sum_{k=1}^J \langle \mathsf{G}(u^{(k)}) - \bar{\mathsf{G}}, \nabla_{\mathsf{y}'} \mathcal{L}(\mathsf{G}(u^{(j)}), \mathsf{y}) \rangle \: u^{(k)}.
\end{split}
\end{align}
If \(\mathcal{L}(\mathsf{y}', \mathsf{y}) = \frac{1}{2} \|\mathsf{y} - \mathsf{y}'\|_\Gamma^2\) then \(\nabla_{\mathsf{y}'} \mathcal{L}(\mathsf{G}(u^{(j)}), \mathsf{y}) = \Gamma^{-1}(\mathsf{G}(u^{(j)}) - \mathsf{y})\)
recovering the original dynamic. Note that since we've defined the loss through the auxiliary variable \(\mathsf{y}'\) which is meant to stand-in for the output of our model, the 
method remains derivative-free with respect to the model parameter \(u\), but does not allow for adding regularization directly into the loss. However regularization could be 
added directly into the dynamic; we leave such considerations for future work. 

An interpretation of the original method is that it aims to make the norm of the residual \(\mathsf{y} - \mathsf{G}(u^{(j)})\) small.
Our modified version replaces this residual with \(\nabla_{\mathsf{y}'} \mathcal{L}(\mathsf{G}(u^{(j)}), \mathsf{y})\),
but when \(\mathcal{L}\) is the cross entropy this is in fact the same (in the \(\ell_1\) sense).
We make this precise in the following
proposition.

\begin{theorem}
Let \(\mathsf{G}: \mathcal{U} \rightarrow (\mathbb{P}^m_0)^N\) and suppose \(\mathsf{y} = [e_{k_1},\dots,e_{k_N}]^T\) where 
\(e_{k_j}\) is the \(k_j\)-th standard basis vector of \(\mathbb{R}^m\).
Then \(u^* \in \mathcal{U}\) is a solution to 
\[\argmin_{u \in \mathcal{U}} \|\mathsf{y} - \mathsf{G}(u)\|_{\ell_1}\]
if and only if \(u^*\) is a solution to 
\[\argmin_{u \in \mathcal{U}} \|\nabla_{\mathsf{y}'} \mathcal{L}(\mathsf{G}(u), \mathsf{y})\|_{\ell_1}\]
where \(\mathcal{L}(\mathsf{y}',\mathsf{y}) = - \langle \mathsf{y}, \log \mathsf{y}' \rangle_{\ell_2}\) is the cross-entropy loss.
\end{theorem}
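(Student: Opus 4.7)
The plan is to reduce both objectives to explicit, separable functions of the $N$-vector of correct-class probabilities $q(u) = (q_1(u), \dots, q_N(u))$, where $q_j(u) := [\mathsf{G}(u)]_{j,k_j}$, and then deduce the equivalence from the coordinate-wise monotonicity of the resulting objectives.

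First I would compute $\|\mathsf{y} - \mathsf{G}(u)\|_{\ell_1}$ directly. Writing $\mathsf{G}(u) = (p^{(j)})_{j=1}^N$ with each $p^{(j)} \in \mathbb{P}^m_0$, the simplex identity $\sum_i p^{(j)}_i = 1$ together with strict positivity yields, for each fixed $j$,
\[
\sum_{i=1}^m \bigl| y_{j,i} - p^{(j)}_i\bigr|
= \bigl(1 - q_j(u)\bigr) + \sum_{i \ne k_j} p^{(j)}_i
= 2\bigl(1 - q_j(u)\bigr),
\]
so $\|\mathsf{y} - \mathsf{G}(u)\|_{\ell_1} = 2\sum_j (1 - q_j(u))$. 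For the gradient term, differentiating $\mathcal{L}(\mathsf{y}', \mathsf{y}) = -\sum_{j,i} y_{j,i} \log y'_{j,i}$ entrywise gives $\partial \mathcal{L}/\partial y'_{j,i} = -y_{j,i}/y'_{j,i}$; the one-hot structure of $\mathsf{y}$ kills every entry except at indices $(j, k_j)$, where the value is $-1/q_j(u)$. Hence $\|\nabla_{\mathsf{y}'} \mathcal{L}(\mathsf{G}(u), \mathsf{y})\|_{\ell_1} = \sum_j 1/q_j(u)$.

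Both objectives thus take the separable form $\sum_j f(q_j(u))$ with $f$ strictly decreasing on $(0,1)$: $f_1(q) = 2(1-q)$ in the residual problem and $f_2(q) = 1/q$ in the gradient problem. I would then argue that $u^*$ lies in either argmin if and only if $q(u^*)$ is Pareto-maximal for the map $u \mapsto q(u)$ over $\mathcal{U}$. The forward direction is immediate from strict monotonicity: any $u$ with $q(u) \ge q(u^*)$ coordinate-wise and strict inequality in at least one coordinate would satisfy $\sum_j f(q_j(u)) < \sum_j f(q_j(u^*))$, contradicting the minimality of $u^*$. Since this Pareto characterization makes no reference to the particular choice of strictly decreasing $f$, the two argmin sets coincide, yielding the claimed equivalence.

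The main obstacle is the converse direction of the Pareto characterization: strict coordinate-wise monotonicity does not in general equate the argmins of two different separable objectives, since a linear objective such as $\sum_j (1 - q_j)$ and a strictly convex one such as $\sum_j 1/q_j$ can select different optima over an arbitrary feasible set. I expect the proof to rely either on extra structure of the feasible image $\{q(u) : u \in \mathcal{U}\} \subseteq (0,1)^N$---for instance the existence of a common coordinate-wise supremum attained by a single $u^*$, so that the Pareto set is a singleton and both argmins trivially agree with it---or on an interpretation at the level of infimizing sequences, since under the strict positivity constraint $\mathsf{G}(u) \in (\mathbb{P}^m_0)^N$ neither infimum is attained and both are approached only as $q(u) \to (1,\dots,1)$.
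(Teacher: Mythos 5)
Your computation of both objectives is exactly the paper's: the simplex identity gives $\|\mathsf{y}-\mathsf{G}(u)\|_{\ell_1}=2\sum_j(1-q_j(u))$ and the one-hot structure of $\mathsf{y}$ gives $\|\nabla_{\mathsf{y}'}\mathcal{L}(\mathsf{G}(u),\mathsf{y})\|_{\ell_1}=\sum_j 1/q_j(u)$, matching the quantities $2(1-\mathsf{G}(u)_k)$ and $1/\mathsf{G}(u)_k$ that appear in the paper. Where the two diverge is in how the equivalence of the argmins is then concluded. The paper opens with ``without loss of generality, we may assume $N=1$,'' after which both objectives are strictly decreasing functions of the single scalar $\mathsf{G}(u)_k$ and the equivalence is a one-line monotonicity statement ($2(1-a)\le 2(1-b)\iff a\ge b\iff 1/a\le 1/b$ for $a,b>0$). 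You instead keep $N$ general, observe that both objectives are separable with strictly decreasing per-coordinate terms, and correctly note that this only yields ``argmin $\Rightarrow$ Pareto-maximal,'' not the coincidence of the two argmin sets.

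The obstacle you flag is genuine, and it is precisely the content hidden in the paper's ``without loss of generality.'' For $N>1$ the residual objective is affine in $q=(q_1,\dots,q_N)$ while the gradient objective is strictly convex, and over an arbitrary feasible image $\{q(u):u\in\mathcal{U}\}\subset(0,1)^N$ these can select different minimizers: with feasible values $(0.9,0.2)$ and $(0.5,0.5)$, the sum $\sum_j(1-q_j)$ prefers the first while $\sum_j 1/q_j$ prefers the second. So the reduction to $N=1$ is not automatic from the statement's hypotheses; it would require extra structure of the kind you suggest (e.g.\ a single $u^*$ attaining the coordinate-wise supremum of $q$, or a reading of both minimizations along infimizing sequences with $q(u)\to(1,\dots,1)$, consistent with the paper's surrounding remarks that these infima are approached only asymptotically). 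In short: your proposal is not less complete than the paper's proof --- it makes explicit the step the paper asserts without justification --- but neither argument closes the $N>1$ case as stated, and you should either restrict the claim to $N=1$, add a hypothesis guaranteeing a common coordinate-wise maximizer, or recast the equivalence at the level of infimizing sequences.
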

\begin{proof}
Without loss of generality, we may assume \(N=1\) and thus let \({\mathsf{y} = e_k}\) be the \(k\)-th standard basis vector 
of \(\mathbb{R}^m\). Suppose that \(u^*\) is a solution to \({\argmin_{u \in \mathcal{U}} \|\mathsf{y} - \mathsf{G}(u)\|_{\ell_1}}\).
Then for any \(u \in \mathcal{U}\), we have
\[\sum_{j \neq k} \mathsf{G}(u^*)_j + (1- \mathsf{G}(u^*)_k) \leq \sum_{j \neq k} \mathsf{G}(u)_j + (1- \mathsf{G}(u)_k).\]
Adding \(0 = \mathsf{G}(u^*)_k - \mathsf{G}(u^*)_k\) to the l.h.s. and \(0 = \mathsf{G}(u)_k - \mathsf{G}(u)_k\) to the 
r.h.s. and noting that \(\|\mathsf{G}(u)\|_{\ell_1} = 1\) for all \(u \in \mathcal{U}\) since \(\text{Im}(\mathsf{G}) = \mathbb{P}_0^m\)
we obtain
\[2(1 - \mathsf{G}(u^*)_k) \leq 2(1 - \mathsf{G}(u)_k)\]
which implies
\[\frac{1}{\mathsf{G}(u^*)_k} \leq \frac{1}{\mathsf{G}(u)_k}\]
as required since \(\|\nabla_{\mathsf{y}'} \mathcal{L}(\mathsf{G}(u), \mathsf{y})\|_{\ell_1} = 1/\mathsf{G}(u)_k\).
The other direction follows similarly.
\end{proof}

\subsubsection{Momentum} 
Continuing in the spirit of optimization, we may also add Nesterov momentum to the EKI method. This is a simple modification 
to the dynamic \eqref{anyloss_enkf},
\begin{align}
\begin{split}
\label{momentum_enkf}
&\ddot{u}^{(j)} + \frac{3}{t} \dot{u}^{(j)} = - C^{\text{uw}}(u) \nabla_{\mathsf{y}'} \mathcal{L}(\mathsf{G}(u^{(j)});\mathsf{y}) \\
&u^{(j)}(0) = u_0^{(j)}, \quad \dot{u}^{(j)}(0) = 0.
\end{split}
\end{align}
While we present momentum EKI in this form, in practice, we follow the standard in machine learning by
fixing a momentum factor \(\lambda \in (0,1)\) and discretizing \eqref{anyloss_enkf} 
using the method shown in subsection \ref{iterativetechnique}. 
In standard stochastic gradient decent, it has been observed that this discretization converges more quickly 
and possibly to a better local minima than the forward Euler discretization \cite{importanceofinitmom}. 
Numerically, we discover a similar speed up for EKI. However, the memory cost doubles 
as we need to keep track of an ensemble of positions and momenta.
Some experiments in the next section demonstrate the speed-up effect.
We leave analysis and possible applications to other inverse problems of the 
momentum method as presented in \eqref{momentum_enkf} for future work.

\subsubsection{Discrete Scheme} 
\label{discretescheme}
Finally we present our modified EKI method in the implementable, discrete time setting and discuss some variants on this basic
scheme  which are particularly useful for machine learning problems. In implementation, it is useful to consider the concatenation of particles \(\mathsf{u} = [u^{(1)}, \dots, u^{(J)}]\)
which may be viewed as a function \(\mathsf{u} : [0,\infty) \rightarrow \mathcal{U}^J\). Then (\ref{anyloss_enkf}) becomes
\[\dot{\mathsf{u}} = - D(\mathsf{u}) \mathsf{u}\]
where for each fixed \(u\) the operator
\(D(\mathsf{u}) : \mathcal{U}^J \rightarrow \mathcal{U}^J\) is a linear operator. Suppose \(\mathcal{U} = \mathbb{R}^P\) then we may exploit symmetry 
and represent \(D(\mathsf{u})\) by a \(J \times J\) matrix instead of a \(JP \times JP\) matrix. To this end, suppose the ensemble members are stacked 
row-wise that is \(\mathsf{u} \in \mathbb{R}^{J \times P}\) then \(D(\mathsf{u})\) has the simple representation
\[(D(\mathsf{u}))_{kj} = \langle \mathsf{G}(u^{(k)}) - \bar{\mathsf{G}}, \nabla_{\mathsf{y}'} \mathcal{L}(\mathsf{G}(u^{(j)}), \mathsf{y}) \rangle\]
which is readily verified by (\ref{anyloss_enkf}).
We then discretize via an adaptive forward Euler scheme to obtain
\[\mathsf{u}_{k+1} = \mathsf{u}_k - h_k D(\mathsf{u}_k)\mathsf{u}_k.\]
Choosing the correct time-step has an immense impact on practical performance. We have found that the choice
\[h_k = \frac{h_0}{\|D(\mathsf{u}_k)\|_F + \epsilon},\]
where \(\|\cdot\|_F\) denotes the Frobenius norm,
works well in practice \cite{mattprivate}. We aim to make \(h_0\) as large as possible without loosing stability of the dynamic. 
The intuition behind this choice has to do with the fact that that \(D(\mathsf{u})\) measures 
how close the propagated particles are to each other (left part of the inner-product) and how close they are to the 
data (right part of the inner-product). When either or both of these are small, we may take larger steps, and still retain numerical stability,
by choosing \(h_k\) inversely 
proportional to \(\|D(\mathsf{u})\|_F\); 
the parameter \(\epsilon\) is added 
to avoid floating point issues when 
\(\|D(\mathsf{u})\|_F\) is near machine
precision.
As \(k \rightarrow \infty\), we typically match the data with increasing
accuracy and, simultaneously, the propagated particles achieve consensus
and collapse on one another; as a consequence
\(\|D(\mathsf{u}_k)\|_F \rightarrow 0\) which means we take larger and larger steps. Note that this is in contrast to the Robbins-Monro implementation
of stochastic gradient descent where the 
sequence of time-steps are chosen to decay monotonically to zero.

Similarly, the momentum discretization of \eqref{anyloss_enkf} is
\begin{align*}
\mathsf{u}_{k+1} &= \mathsf{v}_k - h_k D(\mathsf{v}_k) \mathsf{v}_k \\
\mathsf{v}_{k+1} &= \mathsf{u}_{k+1} + \lambda (\mathsf{u}_{k+1} - \mathsf{u}_k)
\end{align*}
with \(\lambda \in (0,1)\) fixed, \(\mathsf{u}_0 = \mathsf{v}_0\) where \(h_k = h_0/(\|D(\mathsf{v}_k)\|_F + \epsilon)\) as before 
and \(\mathsf{v}\) represent the particle momenta.

We now present a list of numerically successful heuristics that we employ when solving 
practical problems. 

\renewcommand{\labelenumi}{(\Roman{enumi})}
\renewcommand{\labelenumii}{\roman{enumii}.}
\begin{enumerate}
    \item \textbf{Initialization}: To construct the initial ensemble, we draw an i.i.d. sequence \(\{u_0^{(j)}\}_{j=1}^J\)
    with \(u_0^{(1)} \sim \mu_0\) where \(\mu_0\) is selected according to the construction discussed for initialization of the neural network model
    in the section outlining SGD.
    \item \textbf{Mini-batching}: We borrow from SGD the idea of mini-batching where we use only a subset of the data 
    to compute each step of the discretized scheme, picking randomly without replacement. As in the classical SGD context, we call a cycle through 
    the full dataset an epoch.
    \item \textbf{Prediction}: In principle, any one of the particles \(u^{(j)}\) can be used as the parameters of 
    the trained model. However, as analysis of Figure \ref{DenseParticles}
    below shows, the spread in their performance is quite small; furthermore 
    even though the system is nonlinear, the mean particle \(\bar{u}\) 
achieves an equally good performance as the individual particles.
    Thus, for computational simplicity, we choose to use the mean particle as our final parameter estimate.
    This choice further motivates one of the ways in which we randomize.
    \item \textbf{Randomization}: The EKI property that 
all particles remain in the subspace spanned by the initial ensemble is not
desirable when \(J \ll \dim \mathcal{U}\).
    We break this property by introducing noise into the system. We have found two numerically successful ways 
    of accomplishing this.
    \begin{enumerate}
        \item At each step of the discrete scheme, add noise to each particle,
        \[u^{(j)}_k \mapsto u^{(j)}_k + \eta_k^{(j)}\]
        where \(\{\eta_k^{(j)}\}_{j=1}^J\) is an i.i.d. sequence with \(\eta_k^{(1)} \sim \mu_k\). We define 
        \(\mu_k\) to be a scaled version of \(\mu_0\) by scaling its covariance operator namely \(C_k = \sqrt{h_k} C_0\),
        where \(h_k\) is the time step as previously defined. Note that as the particles start to collapse,
        \(h_k\) increases, hence we add more noise to counteract this. In the momentum case, we perform the same mapping
        but on the particle momenta instead
        \[v^{(j)}_k \mapsto v^{(j)}_k + \eta_k^{(j)}.\]
        \item At the end of each epoch, randomize the particles around their mean,
        \[u^{(j)}_{kT} \mapsto \bar{u}_{kT} + \eta^{(j)}_{kT}\]
        where \(T\) is the number of steps needed to complete a cycle through the entire dataset and  
        \(\{\eta_{kT}^{(j)}\}_{j=1}^J\) is an i.i.d. sequence with \(\eta_{kT}^{(1)} \sim \mu_0\). Note that because 
        this randomization is only done after a full epoch, it is not clear how the noise should be scaled and 
        thus we simply use the prior. This may not be the optimal thing to do, but we have found great numerical success
        with this strategy. Figure \ref{NoiseSpread} shows the spread of the ratio of the parameters to the the noise \(\|\bar{u}_{kT}\|/\|\eta^{(j)}_{kT}\|\).
        We see that relatively less noise is added as training continues. 
        It may be possible to achieve better results
        by increasing the noise with time as to combat collapse. However, we do not perform such experiments.
        Furthermore we have found that this does not work well in 
the momentum case; hence all randomization for the momentum scheme 
is done according to the first point.
    \end{enumerate}
    \item \textbf{Expanding Ensemble}: Numerical experiments show that using a small number of particles
    tends to have very good initial performance (one to two epochs) that quickly saturates. On the other hand,
    using a large number of particles does not do well to begin with but greatly outperforms small particle ensembles
    in the long run. Thus we use the idea of an expanding ensemble where we gradually add in new particles.
    This is done in the context of point (ii.) of the randomization section. Namely, at the end of an epoch,
    we compute the ensemble mean and create a new larger ensemble by randomizing around it.  
\end{enumerate}

Lastly we mention that, in many inverse problem applications, it is good practice to randomize the data for each particle at each step \cite{dataassim} namely map
\[\mathsf{y} \mapsto \mathsf{y} + \xi^{^{(j)}}_k\]
where \(\{\xi^{(j)}_k\}_{j=1}^J \) is an i.i.d. sequence with \(\xi^{(1)}_k \sim \pi\). However we have found that this does not work well for classification problems.
This may be because the given classifications are correct and there is no actual noise in the data. Or it may be that we 
simply have not found a suitable measure from which to draw noise. We have not experimented in the case where the labels are 
noisy and leave this for future work.

\section{Numerical Experiments}
\label{numericalexperiments}

In the following set of experiments, we demonstrate the wide applicability of EKI on several 
machine learning tasks. All forward models we consider are some type of neural network, except for 
the semi-supervised learning case where we consider the construction in Example \ref{ex:2.2}. We benchmark 
EKI against SGD and momentum SGD and do not consider any other first-order adaptive methods.
Recent work has shown that their value is only marginal and the solutions they find may 
not generalize as well \cite{marginaladaptive}. Furthermore we do not employ batch normalization as it is not clear 
how it should be incorporated with EKI methods. However, when batch normalization is necessary, we 
instead use the SELU nonlinearity, finding the performance to be 
essentially identical to batch normalization on problems where we have
been able to compare.

The next five subsections are organized as follows. Subsection \ref{numericsconclusions}
contains the conclusions drawn from the experiments. In subsection \ref{numericsdatasets}, 
we describe the five data sets used in all of our experiments as well as the metrics used to evaluate the methods.
Subsection \ref{numericsimplementation} gives implementation details and assigns methods using different techniques their own name. 
In subsections \ref{numericssupervised}, \ref{numericssemisupervised}, and \ref{numericsonline} we show the supervised, semi-supervised, and online learning 
experiments respectively. Since most of our experiments are supervised, we split subsection \ref{numericssupervised} 
based on the type of model used namely dense neural networks, convolutional neural networks,
and recurrent neural networks respectively.

\subsection{Conclusions From Numerical Experiments}
\label{numericsconclusions}

The conclusions of our experiments are as follows:

\begin{itemize}
\item On supervised classification problems with a feed-forward neural network, EKI performs just as well as SGD even 
when the number of unknown parameters is very large (up to half a million) and the number of
ensemble members is considerably smaller (by two orders of magnitude). Furthermore EKI seems
more numerically stable than SGD, as seen in the smaller amount of oscillation in the test accuracy, 
and requires less hyper-parameter tuning. In fact, the only parameter we vary in our experiments
is the number of ensemble members, and we do this simply to demonstrate its effect. However due to the large
number of forward passes required at each EKI iteration, we have found the method to be significantly 
slower. This issue can be mitigated if each of the forward computations is 
parallelized across multiple processing units, as it often is in many industrial applications \cite{parallelenkf3,parallelenkf1,parallelenkf2}.
We leave such computational considerations for future work, as our current goal is simply 
to establish proof of concept. These experiments can be
found in the first two subsections of section \ref{numericssupervised}.

\item On supervised classification problems with a 
recurrent neural network, EKI significantly outperforms SGD.
This is likely due to the steep
barriers that occur on the loss surface of recurrent networks \cite{hardrnn1,hardrnn2} which EKI may be able
to avoid due to its noisy Jacobian estimates. These experiments can be found
in the last subsection of section \ref{numericssupervised}. 

\item On the semi-supervised learning 
problem we consider, EKI does not perform as well as 
state of the art (MCMC) \cite{semisupervised}, but performs better 
than the naive solution. However, even with 
a large number of ensemble members, EKI is much faster and computationally cheaper 
than MCMC, allowing applications to large scale problems. These experiments can be found
in section \ref{numericssemisupervised}.

\item On online regression problems tackled with a recurrent neural network, 
EKI converges significantly faster and to a better solution than SGD with \(\mathcal{O}(1)\)
ensemble members. While the problems we consider are only simple, univariate time-series,
the results demonstrate great promise for harder problems. It has long been known that 
recurrent neural networks are very hard to optimize with gradient-based techniques \cite{hardrnn1},
so we are very hopeful that EKI can improve on current state of the art. Again, we leave 
such domain specific applications to future work. These experiments can be found
in section \ref{numericsonline}.

\end{itemize}

\subsection{Data Sets}
\label{numericsdatasets}

We consider three data sets where the problem at hand is classification and two data sets
where it is regression. For classification, two of the data sets are comprised of images 
and the third of voting histories. Our goal is to classify the image based on its content 
or classify the voting record based on party affiliation. For regression, both 
datasets are univariate time-series and our goal is to predict an unobserved part of the
series. Figure \ref{DataSamples} shows samples from each of the data sets.

\begin{figure}[t]
    \centering
    \begin{subfigure}[b]{0.19\textwidth}
        \includegraphics[width=\textwidth]{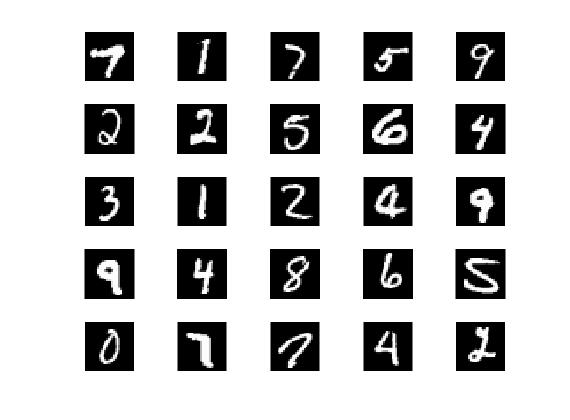}
    \end{subfigure}
    \begin{subfigure}[b]{0.19\textwidth}
        \includegraphics[width=\textwidth]{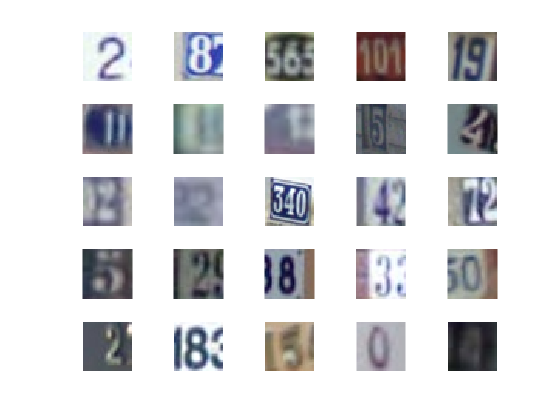}
    \end{subfigure}
    \begin{subfigure}[b]{0.19\textwidth}
        \includegraphics[width=\textwidth]{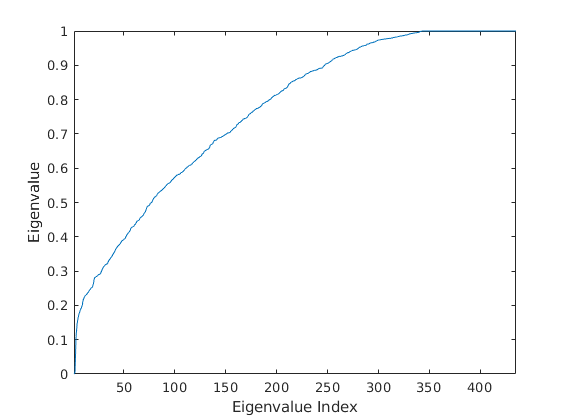}
    \end{subfigure}
    \begin{subfigure}[b]{0.19\textwidth}
        \includegraphics[width=\textwidth]{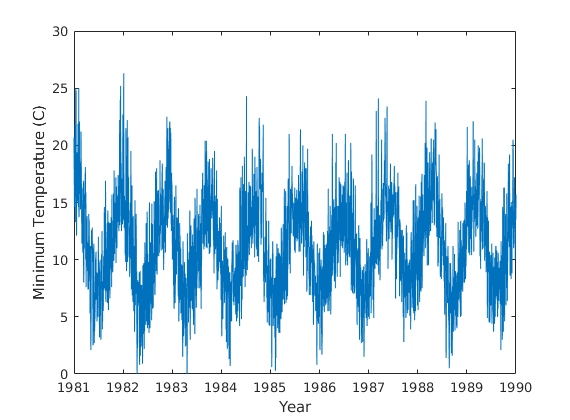}
    \end{subfigure}
    \begin{subfigure}[b]{0.19\textwidth}
        \includegraphics[width=\textwidth]{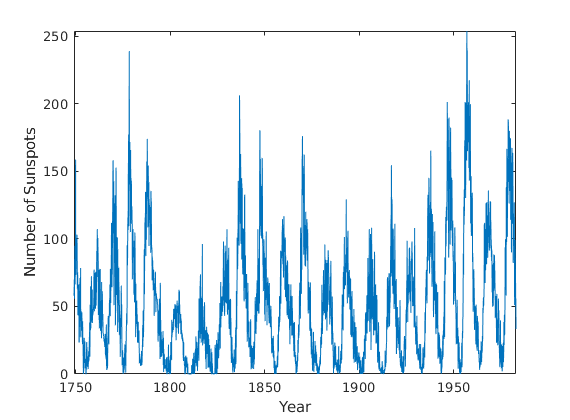}
    \end{subfigure}

    \caption{The five data sets used in numerical experiments. From left to right, the first are 25 samples
    from MNIST and SVHN respectively. The third shows the spectrum of graph Lalpacian for the Voting Records
    data set. The last two are the full time-series for the daily minimum temperatures in Melbourne and 
    the monthly number of sunspots from Z{\"u}rich respectively.}
    \label{DataSamples}
\end{figure}

As outlined in section \ref{learningproblem}, the goal of learning is to find a model which generalizes 
well to unobserved data. Thus, to evaluate this criterion, we split all data sets 
into a training and a testing portion. The training portion is used when we let our ODE(s)
evolve in time as described in section \ref{algorithms}. The testing portion is used only to evaluate 
the model. In other contexts, the training set is further split to create a validation set,
but, since we perform no hyper-parameter tuning, we omit this step. For classification,
the metric we use is called \textit{test accuracy}. This is the total number of correctly classified
examples divided by the total number of examples in the test set. For regression,
the metric we use is called \textit{test error}. This is the average (across the test set) squared \(\ell_2\)-norm 
of the difference between the true value and our prediction.

\subsubsection{Classification} The first data set we consider is MNIST \cite{mnist}. It contains
70,000 images of hand-written digits. All examples are \(28 \times 28 \) grayscale images and each 
is given a classification in \(\{0,\dots,9\}\) depending on what digit appears in the image.
Thus we consider \(\mathcal{X} = \mathbb{R}^{28 \times 28} \cong \mathbb{R}^{784}\) and 
\(\mathcal{Y} = \mathbb{P}^{10}\). Each of the labels \(y_j\) is a standard basis vector 
of \(\mathbb{R}^{10}\) with the position of the 1 indicating the digit. We use 60,000
of the images for training and 10,000 for testing. Since grayscale values range from
0 to 255, all images are fist normalized to the range \([0,1]\) by point-wise dividing
by 255. Treating all training images as a sequence of \(60000 \cdot 784\) numbers,
their mean and standard deviation are computed. Each image (including the test set) is then again normalized
via point-wise subtraction by the mean and point-wise division by the standard deviation. 
This data normalization technique is standard in machine learning.

The second image data set we consider is called SVHN \cite{svhn}. It contains 99,289 natural 
images of cropped house numbers taken from Google Street View. All examples are \(32 \times 32\)
RGB images and each  is given a classification in \(\{0,\dots,9\}\) depending on what 
digit appears in the image. Thus we consider \(\mathcal{X} = \mathbb{R}^{3 \times 32 \times 32} \cong \mathbb{R}^{3072}\) and 
\(\mathcal{Y} = \mathbb{P}^{10}\) with the labels again being basis vectors of \(\mathbb{R}^{10}\). We use
73,257 of the images for training and 26,032 for testing. All values are first normalized to be 
in the range \([0,1]\). We then perform the same normalization as in MNIST, but this time per channel.
That is, for all training images, we treat each color channel as a sequence of \(73257 \cdot 1024\)
numbers, compute the mean and standard deviation then normalize each channel as before.

The last data set for classification we consider contains the voting record of the 435
U.S. House of Representatives members; see \cite{andreasemi} and references therein. 
The votes were recorded in 1984 from the \nth{98} United States Congress, \nth{2}
session. Each record is tied to a particular representative and is a vector in \(\mathcal{X} = \mathbb{R}^{16}\)
with each entry being \(+1\), \(-1\), or \(0\) indicating a vote for, against, or abstain respectively.
The labels live in \(\mathcal{Y} = \mathbb{R}\) and are \(+1\) or \(-1\) indicating Democrat or Republican respectively.
We use this data set only for semi-supervised learning and thus pick the amount of 
observed labels \(|Z'| = 5\) with 2 Republicans and 3 Democrats. No normalization is performed.
When computing the test accuracy, we do so over the entire data sets namely we do not remove 
the 5 observed records.

\subsubsection{Regression} The first data set we consider for regression is a time series 
of the daily minimum temperatures (in Celsius) in Melbourne, Australia from January \nth{1} 1981 
to December \nth{31} 1990 \cite{melbournetemps}. It contains 3650 total observations of which we use the first
3001 for training (up to March \nth{22} 1989) and the rest for testing. We consider \(\mathcal{X} = \mathcal{Y} = \mathbb{R}\)
by letting (in the training set) the data be the first 3000 observations and the labels be the \nth{2} to \nth{3001} observations
i.e. a one-step-ahead split. The same is done for the testing set.
The minimum and 
maximum values \(x_\text{min}\), \(x_\text{max}\) over the training set are computed and 
all data is transformed via 
\[x_j \mapsto \frac{x_j - x_\text{min}}{x_\text{max} - x_\text{min}}.\]
This ensures the training set is in the range \([0,1]\) and the testing set will also be close to that range.

The second data set for regression is a time series containing the number of observed sunspots 
from Z{\"u}rich, Switzerland during each month from January 1749 to December 1983 \cite{sunspots}. It 
contains 2820 observations of which we use the first 2301 for training (up to September 1915) and 
the rest for testing. 
The data is treated in exactly the same way as the temperatures data set.

\subsection{Implementation Details}
\label{numericsimplementation}

Having outlined many different strategies for performing EKI , we give methods
using different techniques their own name so they are easily distinguishable. We refer to the techniques
listed in section \ref{discretescheme}. All methods are initialized in the same way (with the prior constructed based on the model)
and all use mini-batching. We refer to the forward Euler discretization of equation \eqref{anyloss_enkf} as EKI and
the momentum discretization, presented in section \ref{iterativetechnique}, of equation \eqref{anyloss_enkf} as MEKI. When randomizing around the mean
at the end of each epoch, we refer to the method as EKI(R). When randomizing the momenta at each 
step, we refer to the method as MEKI(R). Similarly, we call momentum SGD, MSGD.
All methods use the time step described in section \ref{discretescheme}
with hyper-parameters \(h_0 = 2\) and \(\epsilon = 0.5\) fixed. For any classification problem (except the Voting Records data set),
all methods use the cross-entropy loss whose gradient is implemented with a slight correction 
for numerical stability. Namely, in the case of a single data point, we implement
\[(\nabla_{y'} \mathcal{L}(\mathcal{G}(u),y))_k = -\frac{y_k}{(\mathcal{G}(u))_k + \delta}\]
where the constant \(\delta:=0.005\) is fixed for all our numerical
experiments. Otherwise the mean squared-error loss is used.
All implementations are done using the PyTorch framework \cite{pytorch} on a single machine 
with an NVIDIA GTX 1080 Ti GPU.

\subsection{Supervised Learning}
\label{numericssupervised}

\subsubsection{Dense Neural Networks}

In this section, we benchmark all of our proposed methods on the MNIST problem using 
four dense neural networks of increasing complexity. The four network architectures are 
outlined in Figure \ref{DenseNets}. This will allow us to compare the methods
and pick a front runner for later experiments.

\begin{figure}[t]
\centering 
    \begin{tabular}{| c | c | c |}
        \hline
        \multicolumn{3}{ |c| }{Dense Neural Networks} \\
        \hline
        Name & Architecture & Parameters \\ \hline
        DNN 1 & 784-10 & 7,850 \\ \hline
        DNN 2 & 784-100-10 & 79,510 \\ \hline
        DNN 3 & 784-300-100-10 & 266,610 \\ \hline
        DNN 4 & 784-500-300-100-10 & 573,910 \\ \hline
     \end{tabular}
     \caption{Architectures of the four dense neural networks considered.
     All networks use a softmax thresholding and a ReLU nonlinearity.}
     \label{DenseNets}
\end{figure}

We fix the ensemble size of all methods to \(J = 2000\) and the batch size to 600. SGD uses a learning rate of \(0.1\)
and all momentum methods use the constant \(\lambda = 0.9\). Figure \ref{DenseAccuracies} shows 
the final test accuracies for all methods while Figure \ref{DenseGrowth} shows the accuracies at the end of each
epoch. Due to memory constrains, we do not implement MEKI for DNN-(3,4). 
In general momentum SGD performs best, but EKI(R) trails closely. The momentum EKI methods have 
good initial performance but saturate. We make this clearer in a later experiment. Overall, we see that
for networks with a relatively small number of parameters all EKI methods are comparable to SGD.
However with a large number of parameters, randomization is needed. This 
effect is particularly dominant 
when the ensemble size is relatively small; as we later show, larger ensemble sizes 
can perform significantly better.

\begin{figure}[t]
\centering 
    \begin{tabular}{ | l | c | c | c | c | }
        \hline
         & DNN 1 & DNN 2 & DNN 3 & DNN 4 \\ \hline
        SGD & 0.9199 & 0.9735 & 0.9798 & 0.9818 \\ \hline
        MSGD & 0.9257 & \textbf{0.9807} & \textbf{0.9830} & \textbf{0.9840} \\ \hline
        EKI & \vtop{\hbox{\strut \(\bar{u}\) \hspace{11px} 0.9092} \hbox{\strut \(u^{(j^*)}\) 0.9114}} &
        \vtop{\hbox{\strut \(\bar{u}\) \hspace{11px} 0.9398} \hbox{\strut \(u^{(j^*)}\) 0.9416}} &
        \vtop{\hbox{\strut \(\bar{u}\) \hspace{11px} 0.9424} \hbox{\strut \(u^{(j^*)}\) 0.9432}} &
        \vtop{\hbox{\strut \(\bar{u}\) \hspace{11px} 0.9404} \hbox{\strut \(u^{(j^*)}\) 0.9418}} \\ \hline
        MEKI & \vtop{\hbox{\strut \(\bar{u}\) \hspace{11px} 0.9094} \hbox{\strut \(u^{(j^*)}\) 0.9107}}  & \vtop{\hbox{\strut \(\bar{u}\) \hspace{11px} 0.9320} \hbox{\strut \(u^{(j^*)}\) 0.9332}} & n/a & n/a \\ \hline
        EKI(R) & \vtop{\hbox{\strut \(\bar{u}\) \hspace{11px} 0.9252} \hbox{\strut \(u^{(j^*)}\) \textbf{0.9260}}} &
        \vtop{\hbox{\strut \(\bar{u}\) \hspace{11px} 0.9721} \hbox{\strut \(u^{(j^*)}\) 0.9695}} &
        \vtop{\hbox{\strut \(\bar{u}\) \hspace{11px} 0.9738} \hbox{\strut \(u^{(j^*)}\) 0.9716}} &
        \vtop{\hbox{\strut \(\bar{u}\) \hspace{11px} 0.9741} \hbox{\strut \(u^{(j^*)}\) 0.9691}} \\ \hline
        MEKI(R) & \vtop{\hbox{\strut \(\bar{u}\) \hspace{11px} 0.9142} \hbox{\strut \(u^{(j^*)}\) 0.9162}} &
        \vtop{\hbox{\strut \(\bar{u}\) \hspace{11px} 0.9509} \hbox{\strut \(u^{(j^*)}\) 0.9511}} & n/a & n/a \\ \hline
     \end{tabular}
     \caption{Final test accuracies of six training methods on four dense neural networks, solving the MNIST 
     classification problem. Each bold number is the maximum across the column. For each EKI method we report the accuracy of the mean particle \(\bar{u}\)
     and of the best performing particle in the ensemble \(u^{(j^*)}\).}
     \label{DenseAccuracies}
\end{figure}

\begin{figure}[t]
    \centering
    \begin{subfigure}[b]{0.48\textwidth}
        \includegraphics[width=\textwidth]{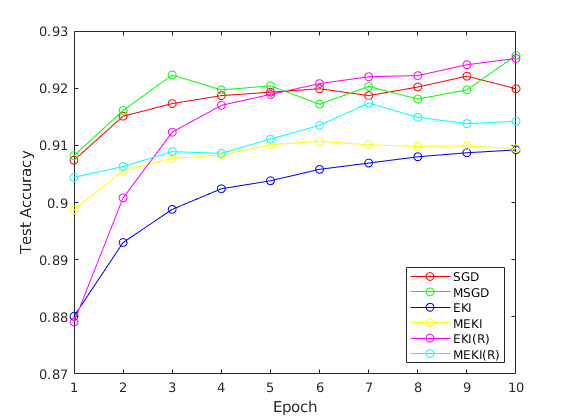}
        \caption{DNN 1}
    \end{subfigure}
    ~
    \begin{subfigure}[b]{0.48\textwidth}
        \includegraphics[width=\textwidth]{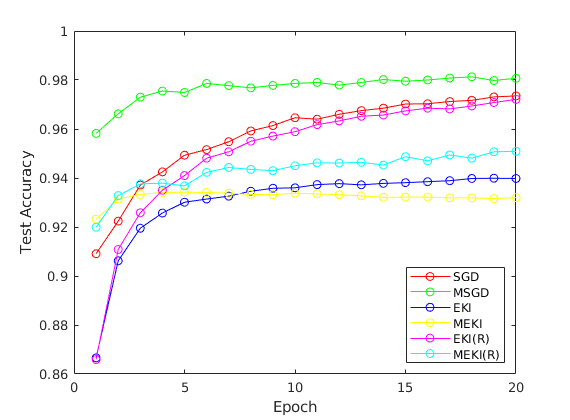}
        \caption{DNN 2}
    \end{subfigure}

    \begin{subfigure}[b]{0.48\textwidth}
        \includegraphics[width=\textwidth]{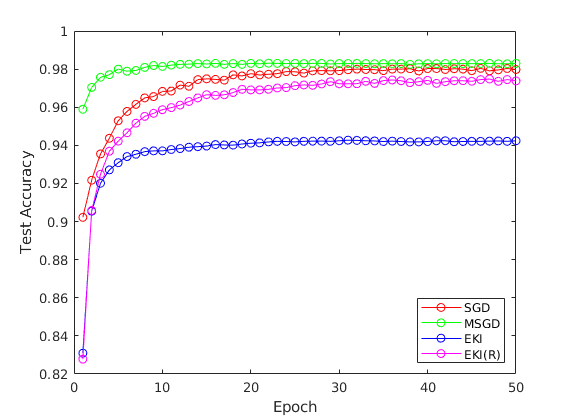}
        \caption{DNN 3}
    \end{subfigure}
    ~
    \begin{subfigure}[b]{0.48\textwidth}
        \includegraphics[width=\textwidth]{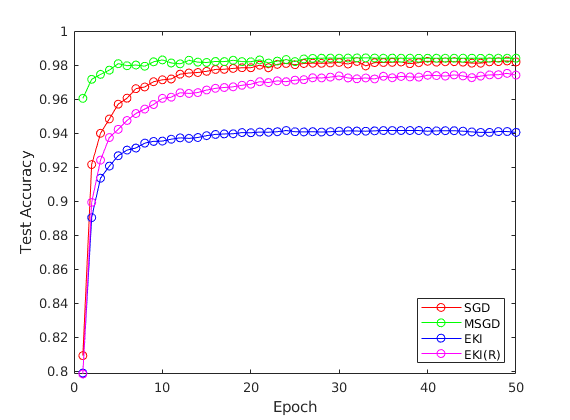}
        \caption{DNN 4}
    \end{subfigure}

    \caption{Test accuracies per epoch of six training methods on four dense neural networks, solving the MNIST 
     classification problem. For each EKI method the accuracy of the mean particle \(\bar{u}\) is shown.}
    \label{DenseGrowth}
\end{figure}

Figure \ref{DenseParticles} shows the test accuracies for each of the 
particles when using EKI on DNN-(1,2). We see 
that, the mean particle achieves roughly the average of the spread, as previously discussed. Our choice to use it as the
final parameter estimate is simply for convenience. One may use all the 
particles in a carefully weighted scheme
as an ensemble of networks and possibly achieve better results. Having many parameter estimates may also be 
advantageous when trying to avoid adversarial examples \cite{adversarial}. We leave these considerations to future work.

\begin{figure}[t]
    \centering
    \begin{subfigure}[b]{0.48\textwidth}
        \includegraphics[width=\textwidth]{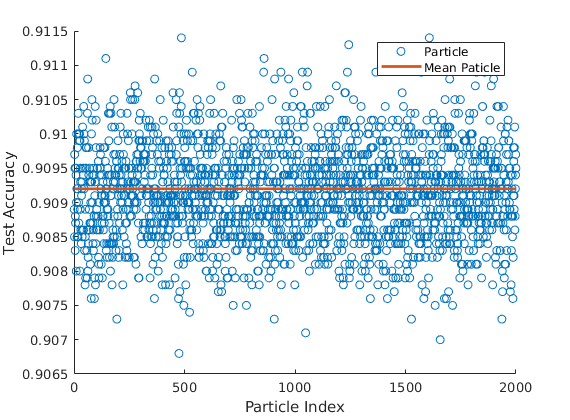}
        \caption{DNN 1}
    \end{subfigure}
    ~
    \begin{subfigure}[b]{0.48\textwidth}
        \includegraphics[width=\textwidth]{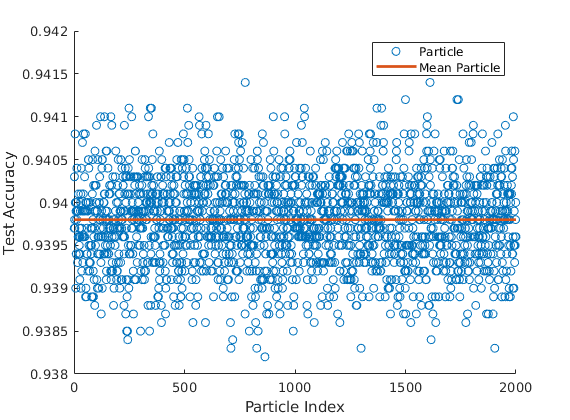}
        \caption{DNN 2}
    \end{subfigure}

    \caption{Particle accuracies of EKI on DNN-(1,2) compared to the accuracy of the mean particle \(\bar{u}\).}
    \label{DenseParticles}
\end{figure}

\begin{figure}[t]
    \centering
    \begin{subfigure}[b]{0.48\textwidth}
        \includegraphics[width=\textwidth]{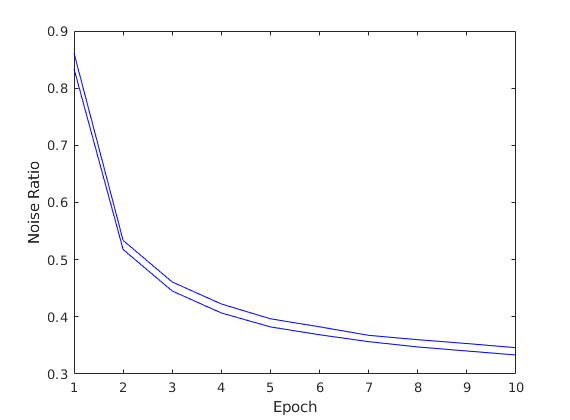}
        \caption{DNN 1}
    \end{subfigure}
    ~
    \begin{subfigure}[b]{0.48\textwidth}
        \includegraphics[width=\textwidth]{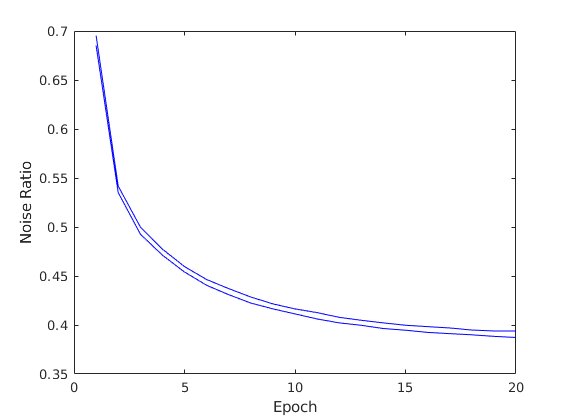}
        \caption{DNN 2}
    \end{subfigure}

    \caption{Spread of the noise ratio for EKI(R) on DNN-(1,2). At the end of every epoch, when the noise is added,
    the upper bound is computed as \(\|\bar{u}\|_2 / \max_j \|\eta^{(j)}\|_2\). The lower bound is computed analogously.}
    \label{NoiseSpread}
\end{figure}

To better illustrate the effect of the ensemble size, we compare all EKI methods on DNN 2
with an ensemble size of \(J = 6000\). The accuracies are shown in Figure \ref{DenseLarge}.
We again observe that the momentum methods perform very well initially, but fall off with
more training. This effect could be related to the specific 
time discretization method we use, but needs to be studied further theoretically and we 
leave this for future work. Note that with a larger ensemble, EKI is now comparable to SGD pointing out that
remaining in the subspace spanned by the initial ensemble is
a bottle neck for this method. On the other hand,
when we randomize, the ensemble size is no longer so relevant. EKI(R) performs
almost identically with 2,000 and with 6,000 ensemble members. Finding it to be the best method 
for these tasks, all experiments hereafter, unless stated otherwise, use EKI(R).

\begin{figure}[t]
\begin{minipage}[t]{0.48\textwidth}
\vspace{0pt}
\centering
\includegraphics[width=\textwidth]{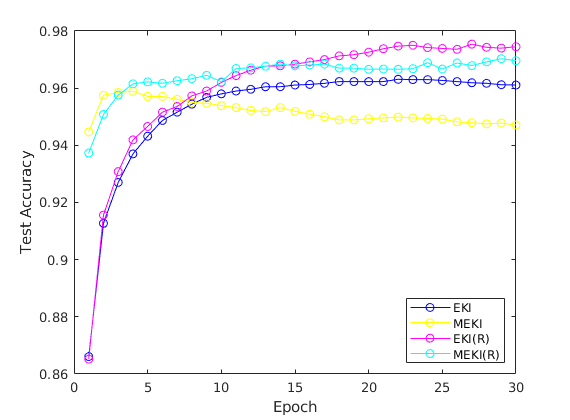}
\end{minipage}%
\begin{minipage}[t]{0.48\textwidth}
\vspace{50pt}
\centering
\begin{tabular}{ | l | c | c | }
        \hline
         & First & Final \\ \hline
        EKI & 0.8661 & 0.9611 \\ \hline
        MEKI & \textbf{0.9447} & 0.9471 \\ \hline
        EKI(R) & 0.8652 & \textbf{0.9745} \\ \hline
        MEKI(R) & 0.9373 & 0.9696  \\ \hline
\end{tabular}
\end{minipage}
\caption{Comparison of the test accuracies of four EKI methods on DNN 2 with ensemble size \(J=6000\).}
\label{DenseLarge}
\end{figure}

\subsubsection{Convolutional Neural Networks}

For our experiments with CNN(s), we employ both MNIST and SVHN. Since MNIST is a 
fairly easy data set, we can use a simple architecture and still achieve 
almost perfect accuracy. We name the model CNN-MNIST and its specifics are given in 
the first column of Figure \ref{ConvArchitectures}. SGD uses a learning rate 
of 0.05 while momentum SGD uses 0.01 and a momentum factor of 0.9. EKI(R)
has a fixed ensemble size of \(J=2000\). Figure \ref{ConvResults}
shows the results of training. We note that since CNN-MNIST uses ReLU and no
batch normalization, SGD struggles to find a good descent direction in the 
first few epochs. EKI(R), on the other hand, does not have this issue and 
exhibits a smooth test accuracy curve that is consistent with all other experiments.
In only 30 epochs, we are able to achieve almost perfect classification with 
EKI(R) slightly outperforming the SGD-based methods. 

Recent work suggests that 
the effectiveness of batch normalization does not come from dealing with the 
internal covariate shift, but, in fact, comes from smoothing the loss surface \cite{smoothingbachnorm}.
The noisy gradient estimates in EKI can be interpreted as doing the same thing and
is perhaps the reason we see smoother test accuracy curves. The contemporaneous
work of Haber et al \cite{haber} further supports this point of view.

\begin{figure}[t]
\centering
\begin{tabular}{ |c|c|c|c| }
\hline
\multicolumn{4}{ |c| }{Convolutional Neural Networks} \\
\hline
CNN-MNIST & CNN-1 & CNN-2 & CNN-3 \\ \hline
Conv 16x3x3 & Conv 16x3x3 & Conv 16x3x3 & Conv 16x3x3 \\ 
Conv 16x3x3 & Conv 16x3x3 & Conv 16x3x3 & Conv 16x3x3 \\
 & & Conv 16x3x3 & Conv 32x3x3 \\
 & & & Conv 32x3x3 \\
 & & & Conv 32x3x3 \\
 MaxPool 4x4 \((s=2)\) & MaxPool 2x2 & MaxPool 2x2 & MaxPool 2x2 \\ \hline
 Conv 16x3x3 & Conv 16x3x3 & Conv 32x3x3 & Conv 48x3x3 \\
 Conv 16x3x3 & Conv 16x3x3 & Conv 32x3x3 & Conv 48x3x3 \\
 & & Conv 32x3x3 & Conv 48x3x3 \\
 & & & Conv 48x3x3 \\
 & & & Conv 48x3x3 \\
 MaxPool 4x4 \((s=2)\) & MaxPool 2x2 & MaxPool 2x2 & MaxPool 2x2 \\ \hline
 Conv 12x3x3 & Conv 32x3x3 & Conv 48x3x3 & Conv 64x3x3 \\
 Conv 12x3x3 & Conv 32x3x3 & Conv 48x3x3 & Conv 64x3x3 \\
 & & Conv 64x3x3 & Conv 96x3x3 \\
 & & & Conv 96x3x3 \\
 & & & Conv 96x3x3 \\
 MaxPool 2x2 & MaxPool 8x8 & MaxPool 8x8 & MaxPool 8x8 \\ \hline
 FC-10 & FC-500 & FC-500 & FC-500 \\ 
 & FC-10 & FC-10 & FC-10 \\ \hline 
\end{tabular}
\caption{Architectures of 4 Convolutional Neural Networks with 6, 7, 10, 16 layers respectively from left to right. All convolutions
use a padding of 1, making them dimension preserving since all kernel sizes are 3x3. CNN-MNIST is evaluated on the
MNIST dataset and uses the ReLU nonlinearity. CNN-(1,2,3) are evaluated on the SVHN dataset and use the SELU nonlinearity. The convention \(s=2\)
refers to the stride of the max-pooling operation namely \(\alpha = \beta = 2\). All networks use a softmax thresholding.}
\label{ConvArchitectures}
\end{figure}

\begin{figure}[t]
    \centering
    \begin{subfigure}[b]{0.48\textwidth}
        \includegraphics[width=\textwidth]{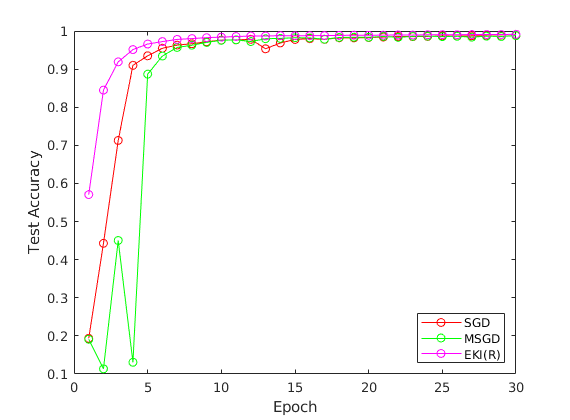}
        \caption{CNN-MNIST}
    \end{subfigure}
    ~ 
    \begin{subfigure}[b]{0.48\textwidth}
        \includegraphics[width=\textwidth]{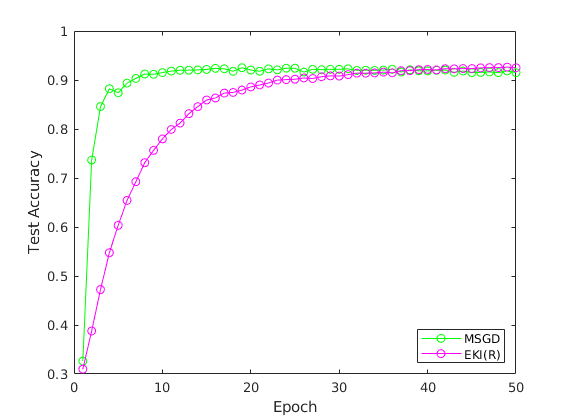}
        \caption{CNN-1}
    \end{subfigure}

    \begin{subfigure}[b]{0.48\textwidth}
        \includegraphics[width=\textwidth]{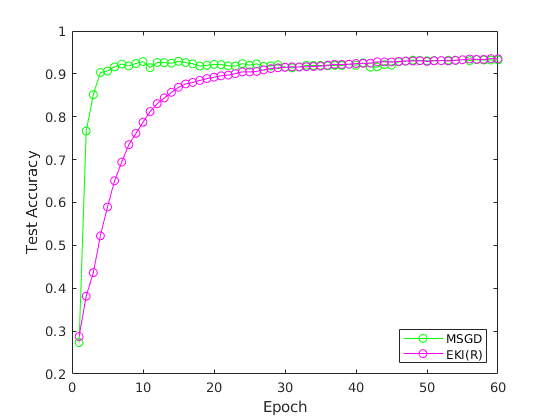}
        \caption{CNN-2}
    \end{subfigure}
    \begin{subfigure}[b]{0.48\textwidth}
        \includegraphics[width=\textwidth]{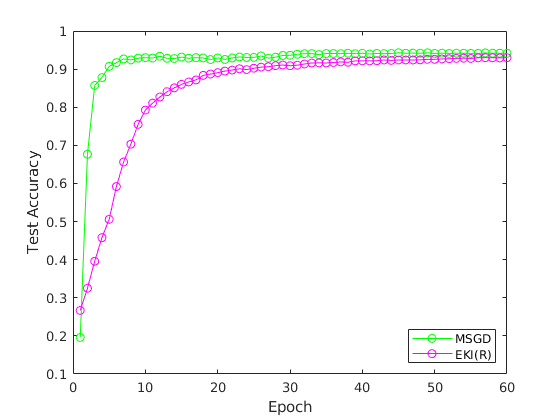}
        \caption{CNN-3}
    \end{subfigure}

    \begin{subfigure}[b]{0.9\textwidth}
        \vspace{10pt}
        \hspace{-30pt}
        \begin{tabular}{ |l|c|c|c|c|c|c|c|c| }
        \hline
        & \multicolumn{2}{ c| }{CNN-MNIST} & \multicolumn{2}{ c| }{CNN-1} & \multicolumn{2}{ c| }{CNN-2} & \multicolumn{2}{ c| }{CNN-3} \\
        \hline
        & First & Final & First & Final & First & Final & First & Final \\ \hline
        SGD & 0.1936 & 0.9878 & n/a & n/a & n/a & n/a & n/a & n/a \\ \hline
        MSGD & 0.1911 & 0.9880 & \textbf{0.3263} & 0.9150 & 0.2734 & 0.9324 & 0.1959 & \textbf{0.9414} \\ \hline
        EKI(R) & \textbf{0.5708} & \textbf{0.9912} & 0.3100 & \textbf{0.9249} & \textbf{0.2874} & \textbf{0.9353} & \textbf{0.2668} & 0.9299 \\ \hline
        \end{tabular}
    \end{subfigure}
    \caption{Comparison of the test accuracies of SGD and EKI(R) on four convolutional neural networks. SGD(M) refers to momentum SGD. CNN-MNIST is trained on the MNIST data set,
    while CNN-(1,2,3) are trained on the SVHN data set. }
    \label{ConvResults}
\end{figure}

Next we experiment on the SVHN data set with three CNN(s) of increasing
complexity. The architectures we use are inspired by those in \cite{goodinit}, and are referred to 
as Fit-Nets because each layer is shallow (has a relatively small number of parameters),
but the whole architecture is deep, reaching up to sixteen layers.
The details for the models dubbed CNN-(1,2,3) are given in Figure \ref{ConvArchitectures}.
Such models are known to be difficult to train; for this reason, the
papers \cite{fitnets,goodinit} present special
initialization strategies to deal with the model complexities. We find that 
when using the SELU nonlinearity and no batch normalization, simple Xavier initialization
works just as well. The results of training are presented in Figure \ref{ConvResults}.
We benchmark only against momentum SGD as all previous experiments show it performs better
than vanilla SGD. The method uses a learning rate of 0.01 and a momentum factor of 0.9. 
EKI(R) starts with \(J=200\) ensemble members and expands by 200 at end the of
each epoch until reaching a final ensemble of \(J=5000\). For CNN-3, memory 
constraints allowed us to only expand up a final size of \(J=2800\). All methods
use a batch size of 500. We see that,
in all three cases, EKI(R) and momentum SGD perform almost identically with EKI(R) slightly 
outperforming on CNN-(1,2), but falling off on CNN-3. This is likely due to the fact 
that CNN-3 has a large number of parameters and we were not able to provide a 
large enough ensemble size. This issue can be dealt with via parallelization by
splitting the ensemble among the memory banks of separate processing units.
We leave this consideration to future work.

\subsubsection{Recurrent Neural Networks}

For the classification task using a recurrent neural network, we return to the MNIST data set. Since
recurrent networks work on time series data, we split each image along its rows, making a 28-dimensional
time sequence with 28 entries, considering time going down from the top to the bottom of the image.
More complex strategies have been explored in \cite{rnnimage}.
We use a two-layer recurrent network with 32 hidden units and a \(\tanh\) nonlinearity, 
namely, in the notation of section \ref{recurrentneuralnetworks},
\[F_\theta(z,q) = \sigma \left ( W^{(2)}_h \sigma \left ( W^{(1)}_h z + b_h^{(1)} + W_x^{(1)}q + b_x^{(1)} \right ) + b_h^{(2)} + W_x^{(2)}q + b_x^{(2)} \right )\]
where \(\sigma = \tanh\) and \(W^{(1)}_h, W^{(2)}_h \in \mathbb{R}^{32 \times 32}\), \(W_x^{(1)}, W_x^{(2)} \in \mathbb{R}^{32 \times 28}\).
We look at only the last output of the network and thus only parametrize the last affine map \(A_{28}\). Softmax thresholding 
is applied. The initial hidden state is always taken to be 0.

We train with a batch size of 600 and SGD uses a learning rate of 0.05. EKI(R) starts with an
ensemble size of \(J=1000\) and expands by 1,000 at the end of every epoch until \(J=4000\) is
reached. Figure \ref{ReccurentMNIST} shows the result of training. EKI(R) performs significantly
better than SGD and appears more reliable, overall, for this task.

\begin{figure}[t!]
\begin{minipage}[t]{0.48\textwidth}
\vspace{0pt}
\centering
\includegraphics[width=\textwidth]{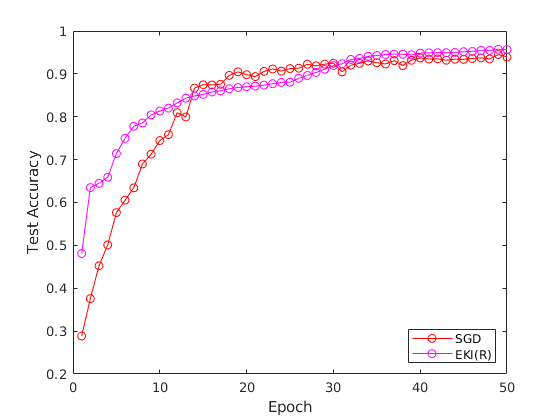}
\end{minipage}%
\begin{minipage}[t]{0.48\textwidth}
\vspace{60pt}
\centering
\begin{tabular}{ | l | c | c | }
        \hline
         & First & Final \\ \hline
        SGD & 0.2825 & 0.9391 \\ \hline
        EKI(R) & \textbf{0.4810} & \textbf{0.9566} \\ \hline
\end{tabular}
\end{minipage}
\caption{Comparison of the test accuracies of EKI(R) and SGD on the MNIST data set with a two layer recurrent neural network.}
\label{ReccurentMNIST}
\end{figure}

\subsection{Semi-supervised Learning}
\label{numericssemisupervised}

We proceed as in the construction of Example \ref{ex:2.2}, using the Voting Records data set.
For the affinity measure we pick \cite{perona,semisupervised}
\[\eta(x, y) = \text{exp} \left ( - \frac{\|x - y\|_2^2}{2  (1.25)^2} \right)\]
and construct the graph Laplacian \(\mathsf{L}(\mathsf{x})\). Its spectrum is shown in
Figure \ref{DataSamples}. Further we let \(\tau = 0\)
and \(\alpha = 1\), hence the prior covariance \(C = (\mathsf{L}(\mathsf{x}))^{-1}\)
is defined only on the subspace orthogonal to the first eigenvector of \(\mathsf{L}(\mathsf{x})\).
The most naive clustering algorithm that uses a graph Laplacian simply thresholds the
eigenvector of \(\mathsf{L}(\mathsf{x})\) that corresponds to the smalled non-zero eigenvalue
(called the Fiedler vector) \cite{spectraltutorial}. Its accuracy is shown in Figure \ref{SemiSuperResults}.
We found the best performing EKI method for this problem to simply be the vanilla
version of the method i.e. no randomization
or momentum. We use \(J = 1000\) ensemble members drawn from the prior and the 
mean squared-error loss. Its performance is only slightly better than the Fiedler vector 
as the particles quickly collapse to something close to the Fiedler vector. 
This is likely due to the fact that the initial ensemble is an i.i.d. sequence drawn from
the prior hence EKI converges to a solution in the subspace orthogonal to the 
first eigenvector of \(\mathsf{L}(\mathsf{x})\) which is close to the Fiedler vector, 
especially if the weights and other attendant hyper-parameters have been
chosen so that the Fielder vector already classifies the labeled nodes correctly. On the other hand,
the MCMC method detailed in \cite{semisupervised} can explore outside of this subspace and achieve 
much better results. We note, however, that EKI is significantly cheaper and 
faster than MCMC and thus could be applied to much larger problems where MCMC is not
computationally feasible.

\begin{figure}
\begin{minipage}[t]{0.48\textwidth}
\vspace{0pt}
\centering
\includegraphics[width=\textwidth]{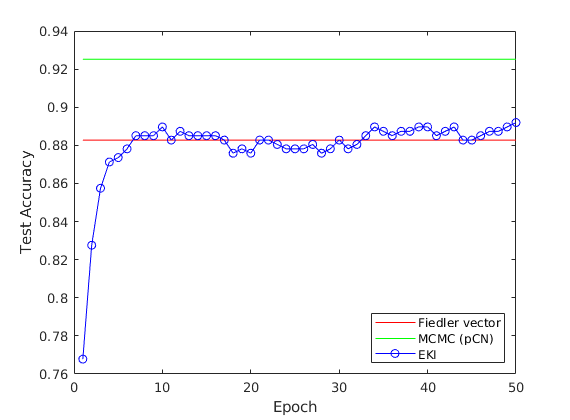}
\end{minipage}%
\begin{minipage}[t]{0.48\textwidth}
\vspace{60pt}
\centering
\begin{tabular}{ | l | c | }
        \hline
         & Accuracy \\ \hline
        Fiedler vector & 0.8828 \\ \hline
        MCMC (pCN) & \textbf{0.9252} \\ \hline
        EKI &  0.8920 \\ \hline
\end{tabular}
\end{minipage}
\caption{Comparison of the test accuracies of two semi-supervised learning algorithms to EKI on the Voting Records data set.}
\label{SemiSuperResults}
\end{figure}

\subsection{Online Learning}
\label{numericsonline}

Finally we consider two online learning problems using a recurrent neural network. 
We employ two univariate time-series data sets: minimum daily temperatures in 
Melbourne, and the monthly number of sunspots observed from Z{\"u}rich.
For both, we use a single layer recurrent network with 32 hidden units and the \(\tanh\) nonlinearity.
The output is not thresholded i.e. \(S = id\). At the initial time, we set the 
hidden state to 0 then use the hidden state computed in the previous step to initialize
for the current step. This is an online problem as our algorithm only sees one data-label
pair at a time. For OGD, we use a learning rate of 0.001 while, for
EKI, we use \(J=12\) ensemble members. Figure \ref{TimeSeriesResults} shows the results of training 
as well as how well each of the trained model fits the test data.
Notice that EKI converges much more quickly and to a slightly better solution 
than OGD in both cases. Furthermore, the model learned by EKI is able to
better capture small scale oscillations. These are very promising results 
for the application of EKI to harder RNN problems.

\begin{figure}[t]
    \centering
    \begin{subfigure}[b]{0.3\textwidth}
        \includegraphics[width=\textwidth]{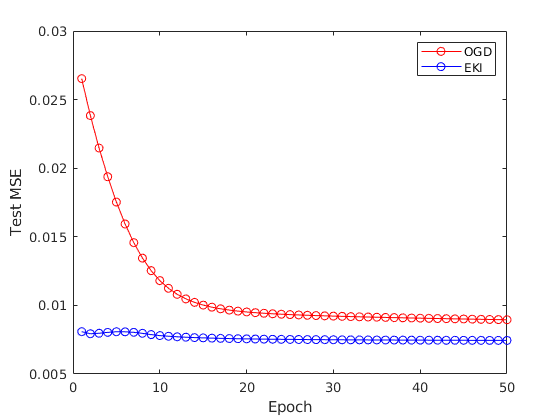}
    \end{subfigure}
    ~ 
    \begin{subfigure}[b]{0.3\textwidth}
        \includegraphics[width=\textwidth]{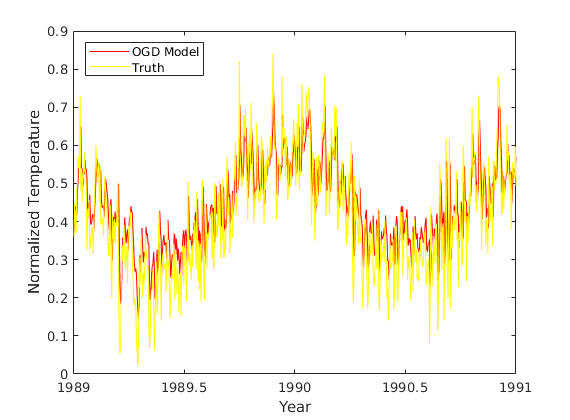}
    \end{subfigure}
     ~ 
    \begin{subfigure}[b]{0.3\textwidth}
        \includegraphics[width=\textwidth]{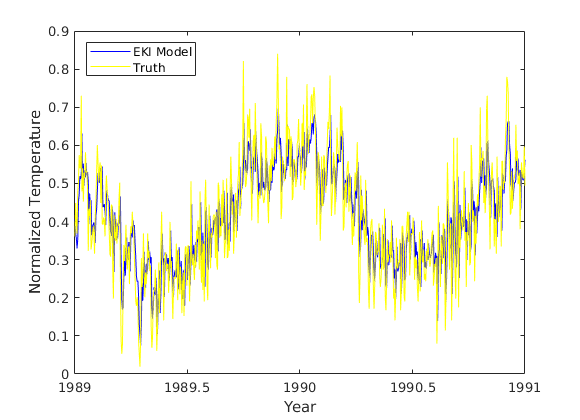}
    \end{subfigure}

    \begin{subfigure}[b]{0.3\textwidth}
        \includegraphics[width=\textwidth]{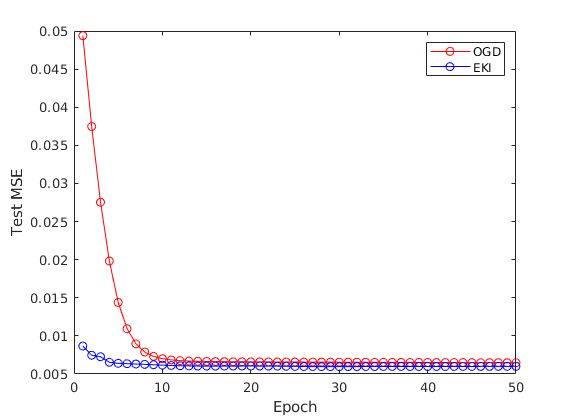}
    \end{subfigure}
    ~ 
    \begin{subfigure}[b]{0.3\textwidth}
        \includegraphics[width=\textwidth]{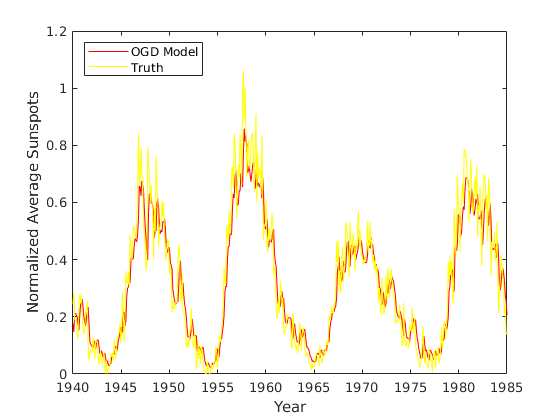}
    \end{subfigure}
     ~ 
    \begin{subfigure}[b]{0.3\textwidth}
        \includegraphics[width=\textwidth]{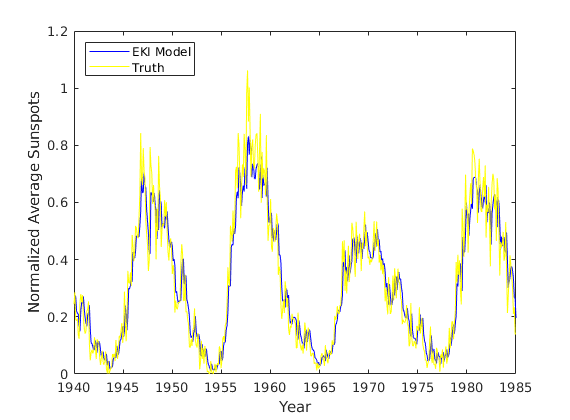}
    \end{subfigure}

    \begin{subfigure}[b]{0.83\textwidth}
        \vspace{10pt}
        \begin{tabular}{ |l|c|c|c|c| }
        \hline
        & \multicolumn{2}{ c| }{Melbourne Temperatures} & \multicolumn{2}{ c| }{Z{\"u}rich Sunspots}  \\
        \hline
        & First & Final & First & Final \\ \hline
        OGD & \(2.653 \times 10^{-2}\) & \(8.954 \times 10^{-3}\) & \(4.939 \times 10^{-2}\) & \(6.480 \times 10^{-3}\)  \\ \hline
        EKI & \(\mathbf{8.086 \times 10^{-3}}\) & \(\mathbf{7.448 \times 10^{-3}}\) & \(\mathbf{8.671 \times 10^{-3}}\) & \(\mathbf{6.006 \times 10^{-3}}\) \\ \hline
        \end{tabular}
    \end{subfigure}

    \caption{Comparison of OGD and EKI on two online learning tasks with a recurrent neural network.
    The top row shows the minimum daily temperatures in Melbourne data set, while the bottom
    shows the number of sunspots observed each month from Z{\"u}rich data set.} 
    \label{TimeSeriesResults}
\end{figure}

\section{Conclusion and Future Directions}
\label{conclusion}

We have demonstrated that many machine learning problems can easily fit into the unified framework 
of Bayesian inverse problems. Within this framework, we apply Ensemble Kalman Inversion methods, for which we 
suggest suitable modifications, to tackle such tasks. Our numerical experiments suggest a 
wide applicability and competitiveness against the state-of-the-art for our schemes. The following
directions for future search arise naturally from our work:

\begin{itemize}
    \item Theoretical analysis of the momentum and general loss EKI methods as
well as their possible application to physical inverse problems.
    \item GPU parallelization of EKI methods and its application to large scale machine learning tasks.
    \item Application of EKI methods to more difficult recurrent neural network problems as well as 
    problems in reinforcement learning.
    \item Use of the entire ensemble of particle estimates to improve accuracy and possibly combat
    adversarial examples.

\end{itemize}

\section{Acknowledgments}

Both authors are supported, in part, by the US National Science Foundation (NSF)
grant DMS 1818977, the US Office of Naval Research (ONR) grant N00014-17-1-2079, and
the US Army Research Office (ARO) grant W911NF-12-2-0022.

\section*{References}
\bibliographystyle{siam}
\bibliography{references}

\end{document}